\title[Understanding the Difficulty of Solving Cauchy Problems with PINNs]{Understanding the Difficulty of Solving Cauchy Problems with PINNs}
\renewcommand{\thefootnote}{\rfnsymbol{footnote}}
\begin{document}

\maketitle

\begin{abstract}%
Physics-Informed Neural Networks (PINNs) have gained popularity in scientific computing in recent years. However, they often fail to achieve the same level of accuracy as classical methods in solving differential equations. In this paper, we identify two sources of this issue in the case of Cauchy problems: the use of $L^2$ residuals as objective functions and the approximation gap of neural networks. We show that minimizing the sum of $L^2$ residual  and initial condition error is not sufficient to guarantee the true solution, as this loss function does not capture the underlying dynamics. Additionally, neural networks are not capable of capturing singularities in the solutions due to the non-compactness of their image sets. This, in turn, influences the existence of global minima and the regularity of the network. We demonstrate that when the global minimum does not exist, machine precision becomes the predominant source of achievable error in practice. We also present numerical experiments in support of our theoretical claims.
\end{abstract}

\renewcommand{\thefootnote}{\arabic{footnote}}

\section{Introduction}

Neural networks have attracted significant attention in the learning of dynamical systems \citep{ruiwang, djeumou}. One example is  solving partial differential equations (PDEs) with Physics-Informed Neural Networks (PINNs) \citep{sirignano2018dgm, raissi2019physics}, which are traditionally handled by classical methods such as finite element method (FEM) \citep{ern}. Despite the promise, several works have identified failure modes of PINNs. It has been observed that when solving one-dimensional Burgers equations, classical FEM can achieve the $L^2$ error \footnote{In some work, the mean-squared error (MSE), or the square of the standard $L^2$ error, is used as the measure of approximation error, usually resulting in lower values. Here we convert it to the corresponding $L^2$ error.} on the magnitude of $10^{-7}$ \citep{khater}, in contrast to the error of magnitude $10^{-2}$ attained by PINNs \citep{krishnapriyan2021characterizing} or DeepONet \citep{lu2022multifidelity}.

In this paper, we develop a fundamental understanding of the failure mode of PINNs. We focus on Cauchy problems, a class of important PDEs for evolution equations in optimal control \citep{fleming} and fluid dynamics \citep{sell}.  Our analysis reveals two main aspects of the failure mode: the use of $L^2$ residuals  as the loss function and neural networks as function approximators. On the theoretical side, achieving zero loss on $L^2$ residual and initial/boundary error does not guarantee the true solution \citep{courant, evans}. PINNs solve a PDE over a given compact domain. In Section~\ref{sec:residual}, we demonstrate that such a setting may conflict with the underlying evolutionary dynamics, such as the propagation of characteristics in first-order equations and the non-local behavior of parabolic equations. For instance, when solving the Burgers' equation, minimizing $L^2$ residual alone produces a smooth solution, whereas the true solution exhibits high-frequency behavior as in Figure~\ref{fig:intro}. In contrast, traditional methods do not encounter this issue because they "respect" the underlying dynamics and leverage it to solve equations through the method of characteristics and/or Fourier transform.

\begin{wrapfigure}{r}{0.62\textwidth}
\centering
\includegraphics[width=0.3\textwidth]{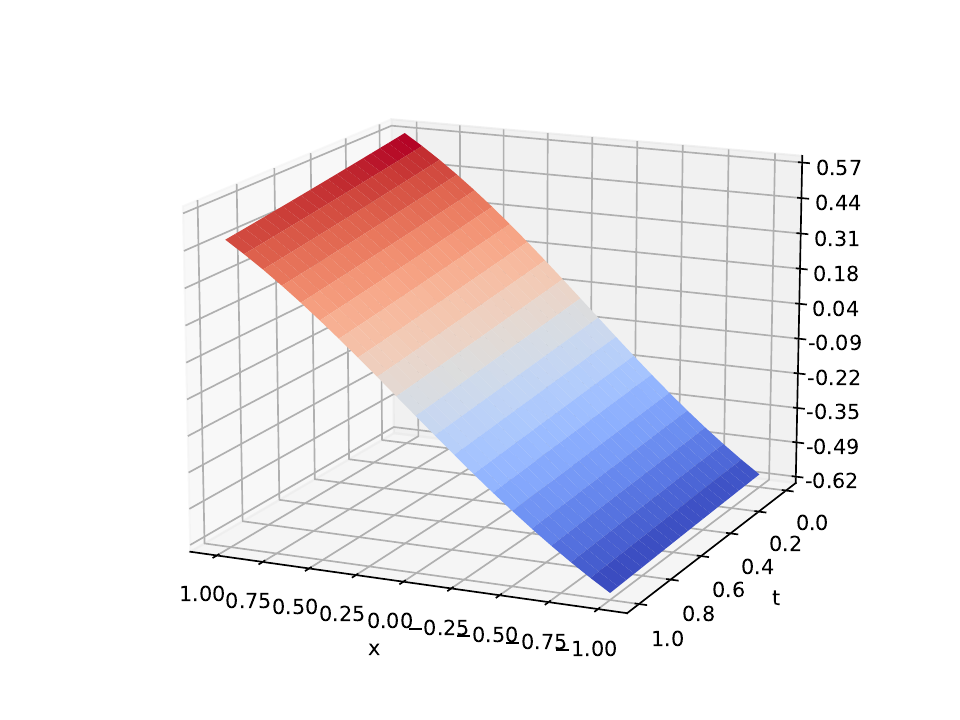}
\includegraphics[width=0.3\textwidth]{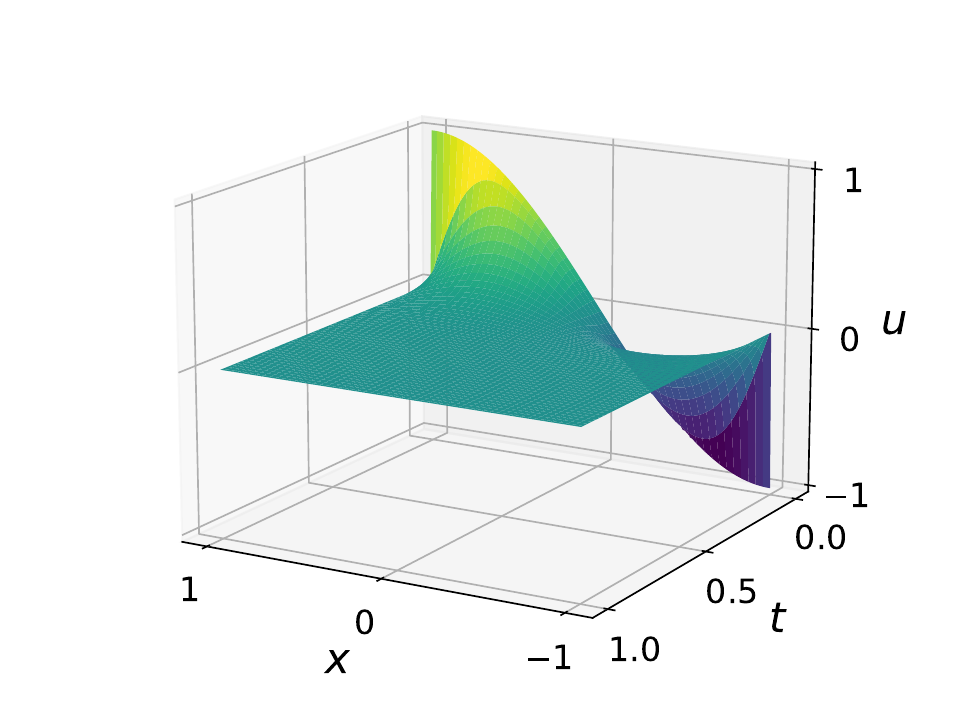}
\caption{An example where PINNs fail to solve (left), compared to the true solution (right).}
\label{fig:intro}
\end{wrapfigure}

Regarding the approximation power of neural networks, the universal approximation theorem (UAT) states that a two-layer neural network with sigmoid activation can approximate \textit{any} continuous functions if it is wide enough \citep{funahashi, hornik, barron}. However, the achievable error in practice is usually much higher than the one UAT suggests. In Section~\ref{sec:nnappro}, we show that the intersection of the image set of a neural network and the closed unit ball is \textit{not} compact in function space, which implies that having a global minimum at infinity is possible, especially when the target function has discontinuity. The influence of machine precision is significant in this case due to the exponential decay of gradients in activation functions such as \textit{sigmoid} and \textit{tanh}. Our new lower bound on actual error explains why neural networks cannot achieve an approximation error below certain thresholds in solving Cauchy problems that have discontinuous solutions.

Our main contribution in this paper can be summarized as follows:

\begin{itemize}

  \item We investigate the difficulty of solving Cauchy problems with PINNs from two perspectives: the loss function of $L^2$ residuals minimization and the approximation gap of neural networks.
  
  \item For the learning objective, we demonstrate that solving Cauchy problems over compact domain can be ill-posed and poorly formulated due to its incapability of capturing the underlying dynamics. Error estimates are obtained for both first-order and second-order equations.

  \item Regarding the approximation power of neural networks, we show that there might be no global minimum when approximating discontinuous functions. This will, in turn, result in a higher error in solutions due to machine precision.

  \item Our results suggest two things for PINNs: first, loss functions should be designed in accordance with the evolutionary dynamics within PDEs, for instance, including regularity conditions and proper boundary values; second, when the true solution has discontinuity, neural networks are not capable of representing it accurately.

\end{itemize}


\section{Related Work}

\paragraph{Limitation of ML methods in solving PDEs.} Solving partial differential equations (PDEs) is one of the core areas in scientific computing. A number of deep learning algorithms have been developed to learn PDE solutions, such as the deep Galerkin method \citep{sirignano2018dgm}, physics-informed neural networks (PINN) \citep{raissi2019physics}, Fourier neural operator \citep{kovachki2021neural}, and DeepONet \citep{lu2019deeponet, lu2022multifidelity}.
Although some of these methods are proven to have the universal approximation property \citep{lu2019deeponet,kovachki2021universal}, their performances in practice are often not compared to traditional methods such as FEM, whose error provably approaches zero as the resolution increases.
In particular, PINN has worse accuracy compared to traditional computational fluid dynamics methods \citep{cai2021physics} and can fail to produce a physical solution in certain problems \citep{chuang2022experience, wang2022}.
This limitation may be attributed to spectral bias and convergence rates of different loss components \citep{wang2022and}. Even when the neural network possesses enough expressiveness to represent the objective function, PINN can fail to obtain the true solution, which motivates analyzing the loss landscape \citep{fuks, mcclenny, krishnapriyan2021characterizing, zhao2022}. Our work seeks to understand this limitation by focusing on the conflict between the evolutionary structure of Cauchy problems and the formulation of PINNs.

\paragraph{The approximation power of neural networks.} Since the space of square-integrable functions over $D$, say $L^2(D)$, is separable \citep{folland}, every basis can perfectly represent any function in $L^2(D)$ if it is allowed to have infinitely many parameters. Thus, the asymptotic guarantee of neural network UAT cannot distinguish neural networks from existing function approximators. Inspired by this fact, many attempts have been made to understand the approximation power of neural networks by evaluating their performance under a fixed number of parameters \citep{berner, devore, petersen}. In particular, the approximation power of neural networks is demonstrated in the way that they can achieve the same level of error as those by classical basis functions, such as polynomials and trigonometric functions, but with fewer parameters \citep{liang, elbrachter, kim}. The error analysis is usually done under the assumption that the target function is Lipschitz continuous. However, most of these frameworks do not apply to non-smooth and discontinuous solutions which is common in scientific computing.

\section{The Difficulty of Solving Cauchy Problems with PINN Loss}
\label{sec:residual}
In this section, we investigate why it is difficult for PINNs to obtain accurate solutions in Cauchy problems from the loss function perspective. We consider the Cauchy problem for first-order and second-order parabolic equations separately, and analyze how PINNs lead to failures in these cases.

\subsection{Problem formulation}

Consider the general form of Cauchy problems:
\begin{equation}
\label{eq:pde}
    \begin{cases}
        & u_t + \mathcal{F}(x, t, u, \nabla u, \Delta u) = 0, \quad (x, t) \in  \mathbb{R}^d \times [0, T];\\
      & u(x, 0) = \phi(x), \quad x \in \mathbb{R}^d;  \\
    \end{cases} 
\end{equation}
where $\mathcal{F}(\cdot)$ is the differential operator representing the PDE model, $u(x, t) \in \mathbb{R}^d \times [0, T]$ is the solution, $(x, t)$ is the space-time pair, $[0, T]$ is the time domain and $\phi(\cdot) \in \mathbb{R}^d$ is the initial condition. Under the framework of PINNs, the solution $u(x, t) = u(x, t; \theta)$ is represented by a neural network parameterized by $\theta \in \mathbb{R}^N$. The neural network is trained to minimize the following loss function, which combines the $L^2$ residual of the PDE model and the fitting error on the initial condition:
\begin{equation}
\label{pdeloss}
    J(\theta) = \| u_t(\cdot; \theta) + \mathcal{F}(u(\cdot; \theta))  \|^2_{L^2(U \times [0, T])} + \| u(\cdot, 0; \theta) - \phi(\cdot) \|^2_{L^2(U )}.
\end{equation}
In practice, PINNs are only able to solve PDEs over compact domains. Therefore, $U$ in \eqref{pdeloss} is usually a compact set instead of $\mathbb{R}^d$. As we will show next, this limitation may prevent PINNs from obtaining the true solutions.

For the remainder of this section, our focus is on the $L^2$ error between the optimal solution $v \in C^2(U \times [0, T])$ obtained by PINNs, which achieves zero loss in \eqref{pdeloss}, and the true solution $u(\cdot)$. This assumption implies that $v_t + \mathcal{F}(x, t, v, \nabla v, \Delta v) = 0$ for all $(x, t) \in U \times [0, T]$ and $v(x, 0) = \phi(x)$ for all $x \in U$. 

\subsection{First-order PDEs that have characteristics}

Generally, the solution of a Cauchy problem is determined by three factors: the differential operator,  initial conditions and regularity conditions. While many PDEs do not have solutions in the classical sense, others possess infinitely many qualified solutions, among which only one represents the real physical solution. Undesired solutions are eliminated by introducing regularity conditions. Therefore, minimizing the PINN objective in \eqref{pdeloss} itself is insufficient to determine the true solution as it only reflects the differential operator and initial condition without any regularity conditions. 

\paragraph{Non-uniqueness of the solution.}

Although regularity conditions assume some level of smoothness on the true solution, smoothness alone does not guarantee uniqueness. According to the Rademacher's theorem, a locally Lipschitz continuous function is differentiable \emph{almost everywhere}. However, the following theorem shows that Lipschitz continuity is not sufficient to determine the uniqueness of the solution:

\begin{theorem}
    There exists a first-order PDE with proper initial/boundary conditions that has infinitely many Lipschitz continuous solutions.
\end{theorem}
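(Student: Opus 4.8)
The plan is to exhibit one explicit first-order equation with boundary data and write down \emph{by hand} an infinite family of Lipschitz functions, each satisfying the equation almost everywhere. Because the statement only asks for Lipschitz solutions and Rademacher's theorem (quoted just above) guarantees a Lipschitz function is differentiable a.e., the honest notion of ``solution'' to use is satisfaction of the PDE at every point of differentiability. The cleanest vehicle is the one-dimensional eikonal (Hamilton--Jacobi) equation
\begin{equation}
|u_x| = 1 \quad \text{on } (0,1), \qquad u(0) = u(1) = 0,
\end{equation}
which is precisely the situation flagged in the surrounding text: the characteristics emanating from the two endpoints collide, so no globally $C^1$ solution exists on $(0,1)$, yet weak (Lipschitz) solutions are plentiful.

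I would then construct the family explicitly. For each interior point $a \in (0,1)$ define the two--tooth ``sawtooth''
\begin{equation}
u_a(x) = \begin{cases} \min(x,\, a-x), & 0 \le x \le a, \\ \min(x-a,\, 1-x), & a \le x \le 1. \end{cases}
\end{equation}
Each $u_a$ is piecewise linear with slopes $\pm 1$, hence globally Lipschitz with constant $1$; it vanishes at $0$, $a$ and $1$, so the boundary data is met; and at every point away from the three kinks $\{a/2,\, a,\, (1+a)/2\}$ it is differentiable with $|u_a'(x)| = 1$. Since the kink set is finite, hence of measure zero, $|u_{a,x}| = 1$ holds a.e., so each $u_a$ is a bona fide Lipschitz solution.

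To finish I would count the family. The assignment $a \mapsto u_a$ is injective --- for $a \neq a'$ the value at the interior zero distinguishes them, e.g.\ $u_a(a) = 0$ while $u_{a'}(a) > 0$ --- so $\{u_a : a \in (0,1)\}$ is an \emph{uncountable} set of pairwise distinct Lipschitz solutions, which in particular yields infinitely many and proves the claim. It is worth recording that the single physically meaningful solution is the distance function $\dist(x, \{0,1\}) = \min(x, 1-x)$, the unique viscosity solution; it is selected only after imposing a regularity (semiconcavity/viscosity) condition that is entirely absent from the $L^2$ PINN objective \eqref{pdeloss}. Thus every member of the family drives both the residual and the boundary term to zero, yet no zero-loss configuration can single out the true solution --- exactly the failure mechanism the section is arguing for.

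The main obstacle here is conceptual rather than computational: one must fix the correct solution concept so that these non-smooth functions legitimately qualify, and confirm that the data is ``proper'' in the sense of being compatible (the two boundary values are consistent and $\dist(\cdot,\{0,1\})$ realizes them). The verifications themselves are elementary. If a genuinely time-evolving formulation matching \eqref{eq:pde} is preferred over a boundary value problem, the same collision-of-characteristics phenomenon reproduces the non-uniqueness for the Hamilton--Jacobi Cauchy problem $u_t + |u_x| = 0$ with a suitably chosen Lipschitz initial profile $\phi$, so restricting to the boundary version entails no loss of generality.
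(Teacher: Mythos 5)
Your proposal is correct and rests on the same core idea as the paper's proof: a Hamilton--Jacobi-type equation whose characteristics collide, admitting an uncountable family of piecewise-linear Lipschitz functions that satisfy the equation a.e.\ while only a viscosity/regularity condition (absent from the PINN loss) would select the physical one. The paper instantiates this with the Cauchy problem $u_t + |u_x|^2 = 0$, $u(x,0)=0$ on $\mathbb{R}\times(0,\infty)$ and the expanding tent solutions $u_a$, whereas you use the stationary eikonal boundary-value problem $|u_x|=1$ on $(0,1)$; since that one-variable stationary equation is literally an ODE rather than a PDE, and the paper's surrounding narrative is specifically about Cauchy problems, the time-dependent variant you mention at the end ($u_t+|u_x|=0$ with Lipschitz initial data, or simply the paper's own example) is the preferable vehicle, but the verification you give --- slopes $\pm1$ off a finite kink set, boundary data met, injectivity of $a\mapsto u_a$ --- is complete and sound.
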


\begin{proof}
    Consider the first-order Hamilton-Jacobi equation
\begin{equation}
\label{eq:hyper2}
    \begin{cases}
        & u_t + |u_x|^2 = 0, \quad (x, t) \in  \mathbb{R} \times (0, \infty);\\
      & u(x, 0) = 0, \quad x \in \mathbb{R}.  \\
    \end{cases} 
\end{equation}
Obviously, $u_0(x, t) \equiv 0$ is a solution to \eqref{eq:hyper2} and also the most intuitive one. However, the function $u_a(x, t) = \begin{cases}
        & 0, \quad |x| \geq a t\\
      & ax - a^2 t, \quad 0 \leq x \leq a t  \\
      & -ax - a^2 t, \quad -at \leq x \leq 0  \\
\end{cases}$ is a Lipschitz continuous solution as well for all $a > 0$. As every $u_a$ solves \eqref{eq:hyper2} a.e., they all achieve zero loss under the objective function \eqref{pdeloss}.
\end{proof}

\paragraph{The overflow of characteristics.}

One of the most important properties of first-order PDEs is that their solutions are determined by the propagation of characteristics, a principle that also applies to the solution obtained by minimizing the PINN loss.  For instance, consider the one-dimensional transport equation:
\begin{equation}
\label{eq:transport}
    \begin{cases}
        & u_t + b(x) \ u_x = c(x), \quad (x, t) \in  \mathbb{R} \times (0, \infty);\\
      & u(x, 0) = \phi(x), \quad x \in \mathbb{R}.  \\
    \end{cases} 
\end{equation}
where $\phi \in C^2(\mathbb{R})$ is the initial condition, and $b(\cdot), c(\cdot)$ are sufficiently smooth. Let $D_x \subset \mathbb{R}^d$ be a compact set and $D = D_x \times [0, T]$, then we have the following estimation of the $L^2$ error:
\begin{theorem}
\label{th:first}
    Let $v \in C^1(D)$ be the solution that achieves zero $L^2$ residual and initial condition error over $D$, and $u \in C^1(\mathbb{R} \times [0, T])$ be the true solution of \eqref{eq:transport} over the entire strip $\mathbb{R} \times [0, T]$, then the $L^2$ error $\| v - u \|_{L^2(D)}$ is determined by the error along the boundary $\partial D$.
\end{theorem}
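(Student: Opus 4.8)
The plan is to exploit the linearity of \eqref{eq:transport} together with the method of characteristics. Set $e := v - u$. Since $u$ solves the transport equation on all of $\mathbb{R} \times [0,T]$ and $v$ achieves zero $L^2$ residual on $D$, so that $v_t + b(x)\,v_x = c(x)$ holds pointwise there (a continuous function with vanishing $L^2$ norm is identically zero), subtracting the two equations shows that $e$ satisfies the \emph{homogeneous} transport equation
\begin{equation}
  e_t + b(x)\, e_x = 0 \quad \text{in } D.
\end{equation}
Moreover, because $v$ also matches the initial data on $D_x$, we have $e(x,0) = v(x,0) - \phi(x) = 0$ for every $x \in D_x$.

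Next I would introduce characteristics. Let $X(s; x_0)$ denote the solution of $\dot{x} = b(x)$ with $X(0; x_0) = x_0$; this flow is well defined and $C^1$ since $b$ is sufficiently smooth. Along any characteristic $s \mapsto (X(s), s)$ the total derivative is $\tfrac{d}{ds}\, e(X(s), s) = e_t + b(X(s))\, e_x = 0$, so \emph{$e$ is constant along each characteristic}, and the value of $e$ at an interior point $(x,t) \in D$ equals its value where the backward characteristic through $(x,t)$ first meets $\partial D$. This splits $D$ into two regions: for points whose backward characteristic exits through the bottom face $\{t=0\}\times D_x$, constancy of $e$ together with $e|_{t=0}=0$ forces $e \equiv 0$, so these points contribute nothing; the remaining points are exactly those whose backward characteristic \emph{overflows} out of the lateral boundary $\partial D_x \times [0,T]$, and there $e(x,t)$ equals the generally nonzero boundary value carried in along the characteristic. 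This is the precise sense in which $\|v-u\|_{L^2(D)}$ is governed entirely by the error on $\partial D$.

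To make this quantitative I would pass to Lagrangian coordinates adapted to the inflow boundary, parametrizing each point of $D$ by its characteristic entry point $p \in \partial D$ and the elapsed time $\tau$ along the characteristic. Writing $\|e\|_{L^2(D)}^2 = \int_D |e|^2\, dx\, dt$ and changing variables, the integrand $|e(x,t)|^2$ is replaced by the constant value $|e(p)|^2$, so the bulk integral collapses to an integral of the boundary data $|e|^2$ over $\partial D$ weighted by the Jacobian of the characteristic flow. By the variational (Liouville) equation this Jacobian is $\exp\!\big(\int_0^{\tau} b'(X(s))\, ds\big)$, which is finite and positive on the compact domain; the resulting expression depends only on the values of $e$ along $\partial D$, establishing the claim.

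The main obstacle I anticipate is the careful treatment of the characteristic geometry and the validity of the change of variables: one must separate the inflow from the outflow portions of the lateral boundary according to the sign of $b$, handle the measure-zero set of characteristics running tangent to $\partial D$, and verify that the flow map is a bijection with a well-defined Jacobian on each region. The Jacobian algebra itself is routine via Liouville's formula; the delicate points are the bookkeeping of which characteristics reach $t=0$ versus overflow laterally, and ensuring the boundary data of $e$ is integrable so that the reduced integral is meaningful.
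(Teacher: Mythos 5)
Your proof is correct and follows essentially the same route as the paper's: both propagate the solution along the characteristics of $\dot{y} = b(y)$ and conclude that the error at an interior point of $D$ equals the error at the point where the backward characteristic first meets $\partial D$ (vanishing when that point lies on $\{t=0\}$). Your variants --- working directly with $e = v - u$, which satisfies the homogeneous equation and is therefore constant along characteristics, and the Lagrangian change of variables with the Liouville Jacobian --- are clean refinements of the same argument rather than a different approach.
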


\begin{proof}
    To see how it is derived from characteristics, consider the corresponding ODE $\Dot{y} = b(y)$, with initial state $y(0) = y_0 \in \mathbb{R}$. Let $V(t) = u(y(t), t)$ and $u \in C^1(\mathbb{R} \times [0, T])$ be a function that satisfies \eqref{eq:transport} for some $T > 0$. Then, we have
$$\Dot{V}(t) = u_t(y(t), t) + u_x(y(t), t) \cdot \Dot{y} = u_t(y(t), t) + b(y(t)) \ u_x(y(t), t) = c(y(t))$$
for all $t \geq 0$. Integrating from $0$ to $t$ and applying the initial condition $V(0) = \phi(y_0)$ yields $u(y(t), t) = \phi(y_0) +  \int_0^t c(y(s)) \ \mathrm{d}s.$ Therefore, the solution is uniquely determined along the characteristic curve $y(\cdot)$. However, the PINN loss considers a compact domain $D_x \subset \mathbb{R}$ and an arbitrary point $(x, t) \in D_x \times [0, T]$, the characteristics passing through $(x, t)$ is not guaranteed to stay in $D$ before reaching the axis $\mathbb{R} \times \{t = 0\}$ when going backward in time. In this case, we have $u(y(t), t) = u(y_{t_0}, t_0) +  \int_{t_0}^t c(y(s)) \ \mathrm{d}s,$ where $y_{t_0} \in \partial D_x$ is on the boundary with $0 < t_0  \leq t$. Similarly, we have $v(y(t), t) = v(y_{t_0}, t_0) +  \int_{t_0}^t c(y(s)) \ \mathrm{d}s,$ where $y(\cdot)$ is the same characteristics as previously (illustrated in Figure~\ref{fig:character}). 

Therefore, the error between $u$ and $v$ at any interior point $(x, t)$ is equal to the error at the initial point of the corresponding characteristics, which is 
$$|u(y(t), t) - v(y(t), t)| = |u(y_{t_0}, t_0) - v(y_{t_0}, t_0)|$$
where $y(t) = x$ and $y_{t_0} \in \partial D_x$ if $t_0 > 0$. 

Hence, the $L^2$ error in the solution is determined by the error along the boundary of the domain ($\partial D_x \times [0, T]$).
\end{proof}

\begin{wrapfigure}{r}{0.35\textwidth}
\centering
\includegraphics[width=0.35\textwidth]{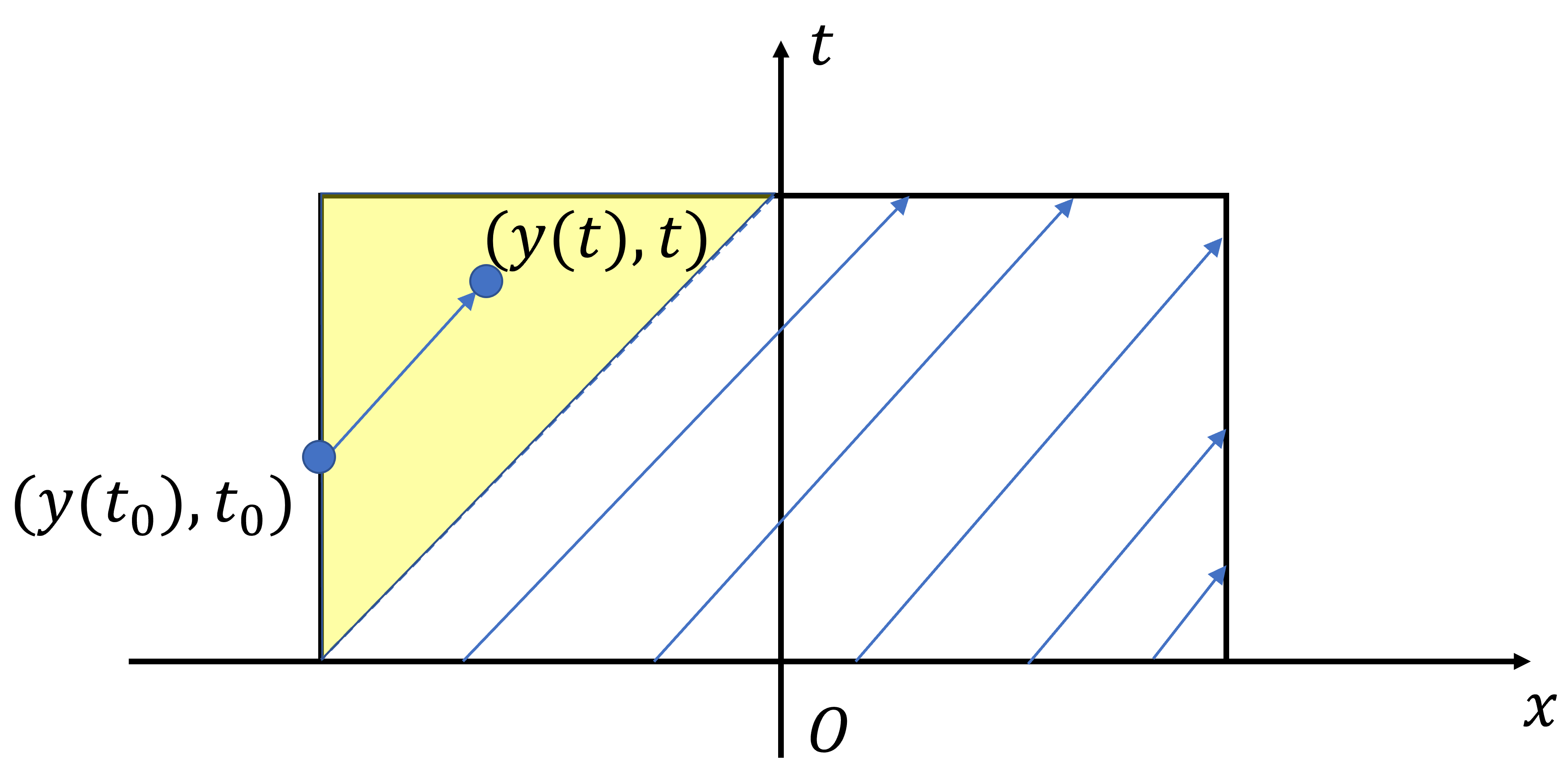}
\caption{The shaded region is not covered by valid characteristics.}
\label{fig:character}
\end{wrapfigure}

To better illustrate the claim of Theorem~\ref{th:first}, let's consider a simple example of solving \eqref{eq:transport} over the compact domain $(x, t) \in D = [-1, 1] \times [0, 1]$ in the case of $b(x) \equiv 1$, $c(x) \equiv 0$ and $T = 1$. In this case, for any point inside the region $\{ (x, t): -1 \leq x \leq 0, x + 1 \leq t \leq 1 \}$, the backward characteristics that passes through it will hit the boundary $\{ -1 \} \times [0, 1]$ first before reaching $\{ t=0 \}$ as in Figure~\ref{fig:character}, which leads to the following analytical formula for the $L^2$ error:
\begin{corollary}
    Let $v \in C^1(D)$ be the solution that achieves zero $L^2$ residual and initial error over $D$, and $u \in C^1(\mathbb{R} \times [0, T])$ be the true solution of \eqref{eq:transport} over the entire strip $\mathbb{R} \times [0, T]$ with $b \equiv 1$ and $c \equiv 0$, then the $L^2$ error is given by
    \begin{equation}
    \label{tperror}
        \| v - u \|_{L^2(D)} = \sqrt{\int_0^1 \int_{0}^t |v(-1, t - x) - u(-1, t - x)|^2 \ \mathrm{d}x \mathrm{d}t}, 
    \end{equation}
    where the term on the right-hand side depends only on the boundary values of $u$ and $v$.
\end{corollary}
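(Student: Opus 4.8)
The plan is to specialize Theorem~\ref{th:first} to the explicit geometry of this example and then evaluate the resulting integral by a change of variables. Since $b\equiv 1$ and $c\equiv 0$, the characteristic ODE $\dot y = 1$ has straight-line solutions, so the characteristics are the lines $\{x-t=\text{const}\}$ and both $u$ and $v$ are \emph{constant} along each of them. I would first record that, because $v\in C^1(D)$ and the residual $v_t+v_x$ is continuous, vanishing of the $L^2$ residual forces $v_t+v_x\equiv 0$ pointwise on $D$; likewise $v(\cdot,0)=\phi$ on $[-1,1]$. Thus $v$ is genuinely transported along characteristics inside $D$, which is what makes the pointwise identity of Theorem~\ref{th:first} applicable here.

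Next I would partition $D$ according to where the backward characteristic through an interior point $(x,t)$ leaves the domain. Running time backward moves a point down and to the left, so the characteristic can exit only through the bottom edge $[-1,1]\times\{0\}$ or the left edge $\{-1\}\times[0,1]$. It reaches the bottom at $(x-t,0)$ precisely when $x-t\geq -1$; there $v(x,t)=\phi(x-t)=u(x,t)$, so such points contribute nothing to the error. When $x-t<-1$ --- that is, on the shaded triangle $R=\{-1\leq x\leq 0,\ x+1\leq t\leq 1\}$ of Figure~\ref{fig:character} --- the characteristic first meets the left edge at $(-1,t-x-1)$, and Theorem~\ref{th:first} gives the pointwise identity $|v(x,t)-u(x,t)|=|v(-1,t-x-1)-u(-1,t-x-1)|$.

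It then remains to integrate. Writing $e(s):=v(-1,s)-u(-1,s)$ for the boundary error, the two observations above yield
\[
\|v-u\|_{L^2(D)}^2=\iint_R |e(t-x-1)|^2\,\mathrm{d}x\,\mathrm{d}t,
\]
and the substitution $x\mapsto x+1$ (shifting the left edge to the spatial origin) maps $R$ onto $\{0\leq x\leq t\leq 1\}$ with unit Jacobian and sends $t-x-1$ to $t-x$, producing exactly the claimed expression \eqref{tperror}. I do not expect any single step to be a serious obstacle; the one place to be careful is the geometric bookkeeping --- correctly identifying $R$, checking that the exit times $t-x-1$ indeed lie in $[0,1]$, and confirming that the forward segment from $(-1,t-x-1)$ to $(x,t)$ stays inside $D$ so that both $u$ and $v$ are constant along it. Everything else is a routine change of variables.
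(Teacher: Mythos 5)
Your proposal is correct and matches the paper's (implicit) argument: the paper derives the corollary from exactly the observation that backward characteristics from the triangle $\{-1\le x\le 0,\ x+1\le t\le 1\}$ exit through the left edge at $(-1,t-x-1)$, with the rest of $D$ contributing zero error since there $v(x,t)=\phi(x-t)=u(x,t)$. Your write-up simply fills in the change of variables $x\mapsto x+1$ and the pointwise-residual justification that the paper leaves unstated, and both steps check out.
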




Since we do not have access to the correct value on the boundary $\{ -1 \} \times [0, 1]$ beforehand, the error estimate \eqref{tperror} depends on the initialization of network parameters and hence cannot be controlled. In contrast, traditional methods solve \eqref{eq:transport} along the propagation of characteristics instead of over a pre-defined compact domain, thereby providing accurate solutions.

\subsection{Second-order parabolic equations with non-local solutions}

For parabolic equations, the $L^2$ error between a solution that achieves zero loss in \eqref{pdeloss} and the true solution can be arbitrarily large due to the non-local behavior of the solution. To make it precise, consider the $d$-dimensional parabolic equation:
\begin{equation}
\label{eq:heat}
    \begin{cases}
        & u_t + \mathcal{F}(x, t, u, \nabla u, \Delta u) = 0, \quad (x, t) \in  \mathbb{R}^d \times (0, T];\\
      & u(x, 0) = \phi(x), \quad x \in \mathbb{R}^d;  \\
    \end{cases} 
\end{equation}
where $\mathcal{F}(x, t, u, \nabla u, \Delta u) = -\sum_{i, j =1}^d a_{ij}(x, t) u_{x_i x_j} + \sum_{i = 1}^d b_i(x, t) u_{x_i} + c(x, t) u$ and the coefficients $a_{ij}(\cdot, \cdot), b_i(\cdot, \cdot), c(\cdot, \cdot)$ are sufficiently smooth and $\phi \in C^2(\mathbb{R})$ is the initial condition. Also, we assume that there exists $\gamma > 0$ such that $\sum_{i, j =1}^d a_{ij}(x, t) \xi_i \xi_j \geq \gamma |\xi|^2$ for all $x \in  U $ and $\xi = [\xi_1, ..., \xi_d] \in \mathbb{R}^d$ so that the matrix $(a_{ij}(x, t))_{i, j}$ is always positive definite. Under these assumptions, the solution of \eqref{eq:heat} can be written in the following form:
\begin{equation}
    u(x, t) = \int_{\mathbb{R}^d} \Gamma(x, t; \xi, \tau) \phi(\xi) \ \mathrm{d}\xi, \quad (x, t) \in \mathbb{R}^d \times [0, T],
\end{equation}
where $\Gamma(x, t; \xi, \tau)$ is called the fundamental solution of the parabolic equation whose explicit form can be found in the Lemma 2 of \cite{dressel}. Note that when \eqref{eq:heat} is the one-dimensional heat equation $u_t = u_{xx}$, the fundamental solution $\Gamma(x, t; \xi, \tau) = \frac{1}{2\sqrt{\pi t}} e^{\frac{|x - \xi|^2}{4t}}$ is exactly the heat kernel.

Let $D = D_x \times [0, T]$, with compact subset $D_x \subset \mathbb{R}^d$, denote the compact domain over which we want to solve \eqref{eq:heat}. For each solution candidate $\eta \in C^2(D)$, the objective functional is given by 
$$P(\eta) = \| \eta_t + \mathcal{F}(x, t, \eta, \nabla \eta, \Delta \eta)  \|^2_{L^2(D)} + \| \eta - \phi \|^2_{L^2(D_x)}.$$ Suppose that $\phi_2 \in C^2(\mathbb{R}^d)$ is another initial condition that has $\phi_2(x) = \phi(x)$ for all $x \in D_x$, and let $v(x, t) = \int_{\mathbb{R}^d} \Gamma(x, t; \xi, \tau) \phi_2(\xi) \ \mathrm{d}\xi$. It can be verified that $v$ solves \eqref{eq:heat} over $D$, i.e., $P(v) = 0$. However, $\phi_2$ is arbitrary outside $D_x$ and the kernel $\Gamma(x, t; \xi, \tau)$ is positive for all $x \in \mathbb{R}^d$, which means that we can choose $\phi_2$ such that the resulting error is arbitrarily large. Therefore, we have proved the following result:
\begin{theorem}
    For any $K > 0$ and compact set $D_x \subset \mathbb{R}^d$, there exists a function $v \in C^2(D_x \times [0, T])$ that achieves zero $L^2$ residual and initial condition error over $D = D_x \times [0, T]$, such that $\| v - u \|_{L^2(D)} > K$ where $u(\cdot)$ is the true solution of \eqref{eq:heat}.
\end{theorem}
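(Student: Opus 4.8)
The plan is to exploit the \emph{non-locality} of parabolic equations, which is encoded analytically in the strict positivity of the fundamental solution $\Gamma$. The key asymmetry is that the objective $P(\eta)$ only constrains the initial data on the compact set $D_x$, whereas the representation formula $v(x,t)=\int_{\mathbb{R}^d}\Gamma(x,t;\xi,\tau)\phi_2(\xi)\,\mathrm{d}\xi$ reads the initial data over \emph{all} of $\mathbb{R}^d$. My idea is to plant a large perturbation of the initial data \emph{outside} $D_x$: this leaves the loss untouched, yet forces the solution inside $D$ to drift arbitrarily far from the true solution $u$.

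First I would construct the perturbed initial condition. Fix a point $\xi_0\in\mathbb{R}^d\setminus D_x$ and a nonnegative bump function $\psi\in C^2(\mathbb{R}^d)$ supported in a small ball $B$ around $\xi_0$ with $B\cap D_x=\emptyset$ and $\psi\not\equiv 0$. For a scale parameter $M>0$, set $\phi_2=\phi+M\psi$ and define $v$ by the representation formula with initial data $\phi_2$. Because $\psi$ vanishes on $D_x$ we have $\phi_2=\phi$ on $D_x$, so $v$ has zero initial-condition error; and since $v$ is produced by the representation formula with smooth data it solves \eqref{eq:heat} exactly on $D$, hence has zero $L^2$ residual. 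Therefore $P(v)=0$ for every value of $M$.

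Next I would estimate the error. By linearity of the integral, and using $\phi_2-\phi=M\psi$,
\[
v(x,t)-u(x,t)=M\int_{\mathbb{R}^d}\Gamma(x,t;\xi,\tau)\,\psi(\xi)\,\mathrm{d}\xi=:M\,g(x,t).
\]
The decisive point is that $g(x,t)>0$ for every $(x,t)\in D_x\times(0,T]$, since $\Gamma$ is strictly positive and $\psi$ is nonnegative and not identically zero. Consequently the constant $C:=\int_0^T\!\int_{D_x} g(x,t)^2\,\mathrm{d}x\,\mathrm{d}t$ is strictly positive and, crucially, \emph{independent of $M$}. It follows that $\|v-u\|_{L^2(D)}=M\sqrt{C}$, which exceeds $K$ as soon as $M>K/\sqrt{C}$, completing the construction.

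The step I expect to be the main obstacle is establishing $C>0$, i.e.\ verifying that the planted mass genuinely reaches into $D$. This rests entirely on the strict positivity of $\Gamma$, the analytic signature of the infinite propagation speed of parabolic equations: a disturbance located at $\xi_0$ outside $D_x$ instantaneously influences the solution at every interior point for any $t>0$. One should also check the behavior as $t\to 0^+$, where $g(x,t)\to 0$ for $x\in D_x$ because the kernel concentrates at $\xi=x\neq\xi_0$; this correctly reflects the vanishing initial error, but it does not weaken the argument, since $g$ stays positive on the positive-measure set $D_x\times(0,T]$ and thus keeps $C$ bounded away from zero.
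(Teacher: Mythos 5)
Your proposal is correct and follows essentially the same route as the paper: perturb the initial condition outside $D_x$ (so the loss $P(v)$ stays zero), then invoke the strict positivity of the fundamental solution $\Gamma$ to force an arbitrarily large $L^2$ discrepancy inside $D$. Your version is simply more explicit than the paper's, which asserts the conclusion directly from positivity of the kernel, whereas you spell out the linear scaling $\|v-u\|_{L^2(D)} = M\sqrt{C}$ with $C>0$ independent of $M$ — a worthwhile addition, since verifying $C>0$ is the one step the paper leaves implicit.
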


In summary, the non-local property of parabolic equations, induced by the kernel function $\Gamma(x, t; \xi, \tau)$, makes the PINN formulation of the Cauchy problems ill-posed. This leads to the difficulty in learning the true solution over compact domains, since even achieving zero training loss does not guarantee a small $L^2$ error.

\section{The Approximation Gap of Neural Networks}
\label{sec:nnappro}
Aside from the limitations of the PINN loss, the structure of neural networks can also be a problem when representing the solution, especially when it is discontinuous. In this section, we focus on the special structure of their image sets and how it affects the accuracy through machine precision. Let $f: \mathbb{R}^N \rightarrow L^2(D)$ denote a neural network, $D \subset \mathbb{R}^d$ be a compact set and $\mathbb{R}^N$ the parameter space.

\subsection{The topology of image set $Im(f)$}
\label{sec:topology}

Now we study the complexity of the image set $Im(f) \subset L^2(D)$. Since unbounded sets are never compact in normed spaces, we will focus on their intersections with the closed unit ball $\mathcal{B}$ in $L^2(D)$, namely $\overline{Im(f)} \cap \mathcal{B}$, to better describe the complexity of the closure of image set $\overline{Im(f)}$. Every closed and bounded set in finite-dimensional Euclidean spaces is compact, but closed and bounded sets may not be compact when the underlying space is infinite-dimensional, such as in $L^2$ spaces. Therefore, compactness is no longer a trivial implication of boundedness plus closedness. To get a better sense of what compact sets in $L^2(D)$ look like, the following theorem provides a full characterization of when a bounded set has compact closure in function spaces:






\begin{proposition}
\label{prop}
    (Fréchet-Kolmogorov theorem, \cite{brezis}) Let $1 \leq p < \infty$ and $D \subset \mathbb{R}^d$ be a bounded measurable set. Then, for any bounded set $S \subset L^p(D)$, its closure $\bar{S}$ is compact if and only if for any $\epsilon > 0$, there exists $\delta > 0$ such that $\|  f(x + h) - f(x) \|_p < \epsilon$ for all $f \in S$ and all $h \in \mathbb{R}^d$ with $| h | < \delta$.
\end{proposition}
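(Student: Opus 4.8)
The plan is to prove the two implications separately, using throughout the fact that in the complete space $L^p(D)$ the closure $\bar S$ is compact if and only if $S$ is totally bounded, so it suffices to produce finite $\epsilon$-nets. A preliminary convention makes everything clean: extend each $f \in S$ by zero outside $D$, so that the translate $\tau_h f := f(\cdot + h)$ is defined on all of $\mathbb{R}^d$ and translation acts as an isometry of $L^p$.

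For the direction ``$\bar S$ compact $\Rightarrow$ translation condition,'' I would exploit total boundedness directly. Given $\epsilon > 0$, cover $S$ by finitely many balls of radius $\epsilon/3$ centered at $f_1, \dots, f_n$. Strong continuity of translation on $L^p$ gives, for each fixed $f_i$, a radius $\delta_i > 0$ with $\|\tau_h f_i - f_i\|_p < \epsilon/3$ whenever $|h| < \delta_i$; set $\delta = \min_i \delta_i$. For arbitrary $f \in S$, choose $f_i$ with $\|f - f_i\|_p < \epsilon/3$ and combine the triangle inequality with the isometry $\|\tau_h f - \tau_h f_i\|_p = \|f - f_i\|_p$ to obtain $\|\tau_h f - f\|_p < \epsilon$ for all $|h| < \delta$, uniformly in $f$.

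The substantive direction is the converse. I would fix a standard mollifier $\rho_\delta$ (smooth, nonnegative, unit integral, supported in $\{|y| \le \delta\}$) and study the convolved family $S_\delta = \{\rho_\delta * f : f \in S\}$. Writing $\rho_\delta * f(x) - f(x) = \int \rho_\delta(y)\,\bigl(f(x - y) - f(x)\bigr)\,\mathrm{d}y$ and applying Minkowski's integral inequality yields $\|\rho_\delta * f - f\|_p \le \sup_{|y| \le \delta} \|\tau_{-y} f - f\|_p$, which the hypothesis drives below any prescribed tolerance \emph{uniformly} over $f \in S$. It then suffices to show each fixed $S_\delta$ is relatively compact, for which I would invoke Arzelà--Ascoli on a bounded open neighborhood of $D$: uniform boundedness follows from H\"older, $|\rho_\delta * f(x)| \le \|\rho_\delta\|_{p'}\,\sup_{f \in S}\|f\|_p$, and equicontinuity from $|\rho_\delta * f(x_1) - \rho_\delta * f(x_2)| \le \|\tau_{x_1}\rho_\delta - \tau_{x_2}\rho_\delta\|_{p'}\,\sup_{f \in S}\|f\|_p$, which is small by the uniform continuity of the fixed smooth kernel. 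Here the boundedness assumption on $S$ is essential to control $\sup_{f \in S}\|f\|_p$. Since $D$ is bounded, total boundedness of $S_\delta$ in the sup norm transfers to $L^p(D)$.

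Combining the two estimates closes the argument: given $\epsilon$, fix $\delta$ so that $\|\rho_\delta * f - f\|_p < \epsilon/2$ for every $f \in S$, cover the relatively compact set $S_\delta$ by finitely many $L^p$-balls of radius $\epsilon/2$, and note that the same centers form an $\epsilon$-net for $S$; completeness of $L^p(D)$ upgrades total boundedness to compactness of $\bar S$. I expect the main obstacle to lie in this converse, specifically in securing the bound $\|\rho_\delta * f - f\|_p \to 0$ \emph{uniformly} in $f$ rather than merely pointwise. This is precisely the step where the equicontinuity-under-translation hypothesis is indispensable, and propagating uniformity through the Minkowski estimate over the entire family is the crux of the theorem.
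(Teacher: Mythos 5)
The paper does not prove this proposition---it is stated as a classical result with a citation to Brezis---and your argument is precisely the standard proof given there: the easy direction via total boundedness plus strong continuity of translation, and the converse via mollification, the uniform Minkowski estimate $\|\rho_\delta * f - f\|_p \le \sup_{|y|\le\delta}\|\tau_{-y}f - f\|_p$, and Arzel\`a--Ascoli applied to the convolved family. The argument is correct, including the necessary convention of extending by zero so that translates are defined, and correctly isolates where the uniformity of the hypothesis and the boundedness of $S$ are used.
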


The above condition is equivalent to the statement that the functions in $S$ have $\lim_{| h | \rightarrow 0}\|  f(x + h) - f(x) \|_p = 0 $ uniformly, indicating that $S$ cannot contain ``too many'' discontinuous functions. For instance,  consider the set $S = \{g_i\}_{i=1}^\infty \subset L^2([0, 1])$ where $g_i(x) = \sqrt{i} \chi_{[0, \frac{1}{i}]}(x)$, then for any $\delta > 0$, there exists $i_0 > 2 \delta^{-1}$ such that $\| g_{i_0}(x + \frac{\delta}{2}) - g_{i_0}(x) \|_2 = \sqrt{2}$, which implies that $S$ is not compact according to Proposition~\ref{prop}.

Next, we consider the image set of a neural network whose activation function is either \textit{sigmoid} or \textit{ReLU}. Neural networks may converge to step functions as the corresponding weights approach infinity as in Figure~\ref{fig:step}, which leads to the following result:

\begin{wrapfigure}{r}{0.35\textwidth}

\vspace{-12pt}
\centering
\includegraphics[width=0.33\textwidth]{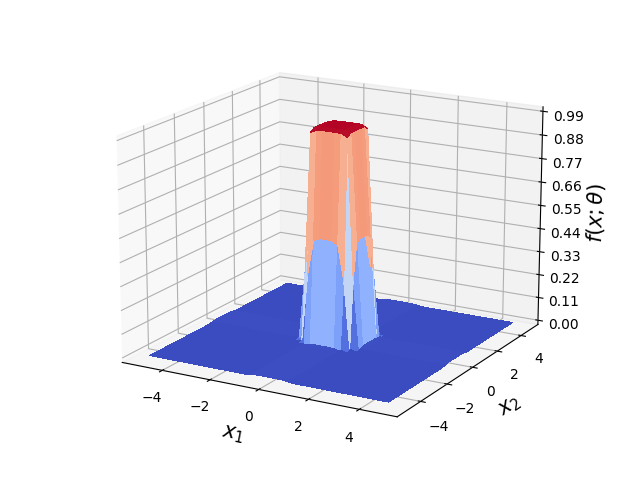}
\caption{Neural networks can converge to step functions.}

\label{fig:step}
\end{wrapfigure}



\begin{theorem}
\label{th:main}
    Let $f: \mathbb{R}^N \rightarrow L^2(D)$ be a neural network with sigmoid or \textit{ReLU} activation function, having a width greater than or equal to $2$ and at least $2$ hidden layers. Then, the set $Im(f) \cap \mathcal{B}$ is not compact. 
\end{theorem}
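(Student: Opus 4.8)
The plan is to prove that $Im(f) \cap \mathcal{B}$ fails to be closed in $L^2(D)$; since compact subsets of a metric space are always closed, this immediately yields non-compactness. Concretely, I will exhibit a sequence $\{f(\theta_n)\}_n \subset Im(f) \cap \mathcal{B}$ that converges in $L^2(D)$ to a scaled step function $S$ lying in $\mathcal{B}$ but not in $Im(f)$. This is the rigorous counterpart of the heuristic in Figure~\ref{fig:step}: as the weights of a single neuron diverge, the network output sharpens into a discontinuous step, and the discontinuous limit escapes the image.

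For the sigmoid case, fix a coordinate $x_1$ and a location $x_0$ in the interior of the projection of $D$, and let the network realize $f(\theta_n)(x) = c\,\sigma(n(x_1 - x_0))$, where $\sigma$ is the sigmoid and $c > 0$ is a scaling constant. The architectural requirements (width at least $2$ and at least $2$ hidden layers) are met by using one active neuron and routing the remaining neurons and the second layer as a pass-through; this is a routine construction I will not expand. Since $D$ is bounded, each integrand is dominated by $(2c)^2$, so the bounded convergence theorem gives $f(\theta_n) \to c\,H(x_1 - x_0)$ in $L^2(D)$, where $H$ is the Heaviside function. Choosing $c$ small enough guarantees both $\|f(\theta_n)\|_{L^2(D)} \le 1$ and $\|c\,H(x_1-x_0)\|_{L^2(D)} < 1$, so the entire sequence and its limit $S := c\,H(x_1 - x_0)$ lie in $\mathcal{B}$. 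For the \textit{ReLU} case the same limit is obtained from $f(\theta_\epsilon)(x) = \tfrac{c}{\epsilon}\big(\mathrm{ReLU}(x_1 - x_0) - \mathrm{ReLU}(x_1 - x_0 - \epsilon)\big)$, a continuous ramp of width $\epsilon$ using exactly two first-layer units, which converges to $S$ in $L^2(D)$ as $\epsilon \to 0$.

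It remains to argue $S \notin Im(f)$. Both sigmoid and \textit{ReLU} networks define continuous functions of $x$, so every element of $Im(f)$ has a continuous representative; but $S$ has a genuine jump across the hyperplane $\{x_1 = x_0\}$ and therefore cannot agree almost everywhere with any continuous function on $D$. Hence $S \in \overline{Im(f)} \cap \mathcal{B}$ while $S \notin Im(f) \cap \mathcal{B}$, which shows $Im(f) \cap \mathcal{B}$ is not closed and completes the proof. This is precisely the obstruction flagged by Proposition~\ref{prop}: the steepening family loses uniform $L^2$-continuity in the limit, so its discontinuous limit sits on the boundary of compactness.

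The main obstacle, and the one step deserving genuine care, is the claim $S \notin Im(f)$: I must justify that continuous-activation networks produce only (continuous) functions and that a nontrivial jump cannot be matched almost everywhere by a continuous function, rather than merely asserting that $S$ is ``discontinuous.'' By contrast, the $L^2$ convergence is routine (bounded convergence on a bounded domain) and the padding that forces the minimal width and depth is mechanical; the only quantitative care needed there is fixing $c$ so that the whole sequence stays inside the unit ball $\mathcal{B}$.
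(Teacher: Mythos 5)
Your proof is correct for the theorem as stated, but it runs on a genuinely different mechanism from the paper's. You prove failure of \emph{closedness}: a single sequence of sharpening sigmoids/ReLU ramps converges in $L^2(D)$ to a scaled Heaviside function $S$, and $S$ cannot lie in $Im(f)$ because every network realization is continuous while $S$ has a genuine jump; since compact subsets of a metric space are closed, non-compactness follows. The paper instead proves failure of \emph{total boundedness}: it shows that an entire family of normalized step functions supported on disjoint dyadic intervals lies in $\overline{Im(f)}$, with pairwise $L^2$ distance $\sqrt{2}$, so no subsequence can converge. The trade-off is real. Your route is more economical (one limit point suffices, and you correctly identify the one step needing care --- that a function with a jump across a hyperplane meeting the interior of $D$ cannot agree a.e.\ with a continuous function); it also sidesteps a normalization slip in the paper's construction (their $\phi_n = 2^{2n}$ on an interval of length $2^{-n}$ has $L^2$ norm $2^{3n/2}$, not $1$). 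But your argument establishes only that $Im(f)\cap\mathcal{B}$ is not closed, and hence says nothing about the compactness of $\overline{Im(f)}\cap\mathcal{B}$ --- which is the object Section~\ref{sec:topology} explicitly says it cares about and which the later discussion of ``global minimum at infinity'' leans on; the paper's separated-sequence argument yields that stronger conclusion. Two smaller points: for the sigmoid case your ``second layer as a pass-through'' is not literal (a sigmoid layer is not the identity), though the fix is easy --- composing the diverging first layer with any fixed second-layer sigmoid having nonzero weight still produces a discontinuous two-valued limit; and your argument implicitly assumes $D$ has positive measure on both sides of the hyperplane $\{x_1 = x_0\}$, which should be stated.
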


\begin{proof}
This proof consists of two parts: first, we prove the claim that \textit{sigmoid} and \textit{Relu} networks can approximate step functions; second, we will show that the set of normalized step functions is non-compact. 

    \textit{Sigmoid:} Let $\sigma$ be \textit{sigmoid}. For any hypercube $R = [l_1, r_1] \times ... \times [l_d, r_d]$, consider the network
    $$f(x; \theta_n) = \sigma(n y - n (d + \frac{1}{2})), $$
    $$\quad y = \sum_{i = 1}^d (\sigma(w^T_{i, 1} x + b_{i, 1}) + \sigma(w^T_{i, 2} x + b_{i, 2}))$$
    where $w_{i, 1} = [0, ..., 0, n, 0, ..., 0]^T$ and $w_{i, 2} = [0, ..., 0, -n, 0, ..., 0]^T$ are aligned with the $i$-th axis, $b_{i, 1} = -n l_i$ and $b_{i, 2} = n r_i$. It can be verified that the limit of the sequence $\{ f(x; \theta_n) \}_{n=1}^\infty$ exists for almost every $x \in \mathbb{R}^d$, which is exactly the indicator function $\chi_D$. Also, since $f(x; \theta_n) \in Im(f)$ for all $n$, we have $\chi_D \in \overline{Im(f)}$. This can be easily extended to the case of $K$ sums.

    \textit{Relu:} Without loss of generality, it suffices to show that the indicator function of interval $[0, 1]$ is approximated by a sequence of \textit{Relu} networks $\{ f(x; \theta_n) \}$ that are given by
    $$f(x; \theta_n) = \sigma(y - 2 + \frac{1}{n}), \quad y = \sigma(n x + \frac{1}{2}) - \sigma(n x - \frac{1}{2}) + \sigma(-n x + n - \frac{1}{2})$$
    so that $f(x; \theta_n) \rightarrow \chi_{[0, 1]}$ a.e. $x \in [-1, 1]$ as $n \rightarrow \infty$. This can be extended toward higher-dimensional cases, and thus we omit the detail.

    Next, let us prove the non-compactness of the set of normalized step functions. Consider the function sequence $\{ \phi_n \}_{n=1}^\infty \subset \mathcal{B}$ where $\phi_n$ is given by 
    \begin{equation*}
        \phi_n = 
        \begin{cases}
            & 2^{2n}, \quad x \in  (1 - 2^{1-n}, 1 - 2^{-n});\\
      & 0, \quad \textit{elsewhere}.
        \end{cases} 
    \end{equation*}
Then we have $\| \phi_m - \phi_n  \|_2 = \sqrt{2}$, which implies that $\{ \phi_n \}$ has no convergent subsequence. Thus, $\{ \phi_n \}_{n=1}^\infty$ cannot be sequentially compact. Since compactness and sequential compactness are equivalent in metric spaces, the set of normalized step functions is not compact as it contains $\{ \phi_n \}_{n=1}^\infty$.
\end{proof}

The non-compactness of neural network image sets can lead to the following two problems:
\begin{itemize}
    \item \emph{Global minimum at infinity}: Usually, if there exists a global minimum, it has a region of attraction in its neighborhood in which the objective function is strictly convex and guarantees local convergence. When there is no global minimum, however, the minimal achievable error is much higher as discussed in \ref{sec:global}. 

    \item \emph{Loss of regularity}: While neural networks represent Lipschitz continuous functions with bounded parameters, they may lose smoothness as weights approach infinity. According to Theorem~\ref{th:main}, there could be a sequence of parameters $\{ \theta_i \} \subset \mathbb{R}^N$ that produces a decreasing sequence $\{ J(\theta_i) \}$ but diverges to infinity itself. While each neural network $f(\theta_i)$ is Lipschitz continuous for all $i \in \mathbb{N}$, the continuity of $\lim_{i \rightarrow \infty} f(\theta_i)$ is not always implied. Therefore, regularity cannot be guaranteed through training, even if the training loss continues to decrease.

\end{itemize}

\subsection{Unreachable Infinity}
\label{sec:global}

Having the global minimum at infinity would not affect trainability if we could approach it. Unfortunately, this is not the case due to machine precision, since the weights in a neural network must tend towards infinity, which is prohibited by exponentially-decaying gradients for activation functions like  \textit{sigmoid} or \textit{tanh}. In particular, we have the following estimate:

\begin{theorem}
\label{th:minimum}
   (Lower bound on actual error) Let $f$ be a neural network with \textit{sigmoid} or \textit{tanh} activation functions and $\phi$ be the target function. Suppose that the target function $\phi$ has discontinuity, then the smallest attainable $L^2$ error $e \sim \mathcal{O}( \sqrt{\Delta x} \ |\log \epsilon|^{-\frac{1}{2}})$ where $\Delta x < 1$ is the grid size and $\epsilon > 0$ is the machine precision.
\end{theorem}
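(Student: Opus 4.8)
The plan is to reduce the problem to approximating a single jump discontinuity and then to balance two competing effects: the approximation error of a steep sigmoid, which improves as the weight grows, against the finite-precision ceiling on how large that weight can become. First I would localize: since $\phi$ is discontinuous, there is a point $x_0$ across which $\phi$ jumps by a fixed amount $\kappa > 0$. Near $x_0$ the best a single sigmoid or $\tanh$ unit can do is place a transition $\kappa\,\sigma(w(x - x_0))$ whose steepness is governed by one scale $w > 0$; since any configuration that resolves the jump to width $\ell$ must use an effective weight of order $1/\ell$, it suffices to analyze the one-parameter family indexed by $w$.

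Next I would estimate the $L^2$ error of such an approximant. Because $\sigma(w(x-x_0))$ departs appreciably from the step only on a transition window of width $\Theta(1/w)$, where the discrepancy is $\Theta(\kappa)$, the squared error is $\Theta(\kappa^2/w)$, hence $e \asymp \kappa\,w^{-1/2}$. This already forces $w \to \infty$ to drive the error to zero, consistent with the non-compactness in Theorem~\ref{th:main}: the minimizing sequence escapes to infinity in parameter space rather than attaining a minimizer.

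The crux --- and the main obstacle --- is to pin down the largest $w$ attainable under machine precision $\epsilon$. Here I would use that the error is evaluated on a grid of spacing $\Delta x$, so its gradient in $w$ is a finite sum whose terms each carry a factor $\sigma'(w(x_i - x_0))$. For sigmoid and $\tanh$ one has $\sigma'(z) \asymp e^{-c|z|}$ as $|z| \to \infty$. When $x_0$ lies strictly between grid nodes, the nearest node sits at distance $\Theta(\Delta x)$, so every active term is suppressed by $e^{-c\,w\,\Delta x}$. Gradient-based training can increase $w$ only while this quantity stays representable, i.e.\ while $e^{-c\,w\,\Delta x} \gtrsim \epsilon$; once it underflows the update vanishes and $w$ stalls, giving the ceiling $w_{\max} \asymp |\log \epsilon| / \Delta x$.

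Substituting this ceiling into $e \asymp \kappa\,w^{-1/2}$ yields $e \asymp \kappa\,\sqrt{\Delta x}\,|\log\epsilon|^{-1/2} = \mathcal{O}(\sqrt{\Delta x}\,|\log\epsilon|^{-1/2})$, the claimed bound. The two delicate points I expect to defend are that no richer architecture beats the single-scale family at a fixed weight magnitude, so $e \gtrsim w^{-1/2}$ is a genuine lower bound and not merely an achievable rate; and that the passage from ``the analytic gradient drops below $\epsilon$'' to ``finite-precision training cannot further increase $w$'' is the operative mechanism. Both hinge on the exponential saturation of $\sigma'$ rather than its polynomial-in-$w$ continuous-integral counterpart, which is precisely why the grid scale $\Delta x$ enters the final estimate.
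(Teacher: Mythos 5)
Your proposal follows essentially the same route as the paper's proof: reduce to a single sigmoid neuron resolving a jump, note the $L^2$ error scales as $w^{-1/2}$, cap the reachable weight at $w_{\max} \asymp |\log\epsilon|/\Delta x$ because the gradient decays like $e^{-|w|\Delta x}$ and underflows below machine precision at the nearest grid node, and substitute to get $e \sim \mathcal{O}(\sqrt{\Delta x}\,|\log\epsilon|^{-1/2})$. The two delicate points you flag --- that the single-neuron family gives only an achievable rate rather than an architecture-wide lower bound, and that gradient underflow is the operative stalling mechanism --- are likewise not resolved in the paper, whose proof is an explicit calculation for $\sigma(wx)$ against $\chi_{[0,1]}$ on $[-1,1]$.
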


\begin{proof}
    Consider a finite dataset $X = \{ x_1, ..., x_N \} \subset \mathbb{R}^d$, and  a neuron $f(x; \theta) = \sigma(w^T x + b)$ where $\sigma$ is the \textit{sigmoid} activation function. Since the precision of a computing machine is always finite, let $\epsilon = 2^{-p}$ be the precision where $p \in \mathbb{N}$. Then for any quantity $\eta \in \mathbb{R}$ with $|\eta| < \epsilon$, it yields $\eta = 0$ on the machine. Let $\phi(\cdot)$ be the target function with $\max_{x \in D}|\phi(x)| \leq M$. For each $x \in D$, the norm of the gradient of approximation error $L(x, \theta) = (f(x; \theta) - \phi(x))^2$ with respect to $w$ is 
$$|\nabla_w L (x, \theta)| = 2|f(x; \theta) - \phi(x)| \ |\frac{2x e^{-(w^Tx + b)}}{(1 + e^{-(w^Tx + b)})^2}| \leq 2 (M + 1) M' e^{-|w^Tx + b|}$$
when $M' = \max_{x \in D} | x |$. Similarly, we have $|\nabla_b L (x, \theta)| \leq 2 (M + 1) e^{-|w^Tx + b|}$. Therefore, $\nabla_\theta L(x, \theta)$ is zero on the machine when both quantities are less than $\epsilon$, i.e., there exists 
$$\delta = \max \{ (p-1)\log 2 - \log((M + 1) M'), (p-1)\log 2 - \log(M + 1), 0 \}$$
such that $\nabla_\theta L (x_i, \theta) = 0$ for all data point $x_i$ outside the region $\mathcal{E}(w, b) = \{ x \in \mathbb{R}^d: |w^T x + b| \geq  \delta \}$. In other words, such points have no contribution to the gradient of the loss objective with respect to the parameter of this neuron. Notice that $\mathcal{E}(w, b)$ is exactly the collection of all points whose distance to the hyperplane $w^T x + b = 0$ is $\frac{\delta}{|w|}$ where the constant $\delta$ depends only on the precision and the target function, the width of effective region $\mathcal{E}(w, b)$ decays as $\|w\|$ grows at the rate of $\mathcal{O}(\|w\|^{-1})$.

\begin{wrapfigure}{r}{0.25\textwidth}
\centering
\includegraphics[width=0.25\textwidth]{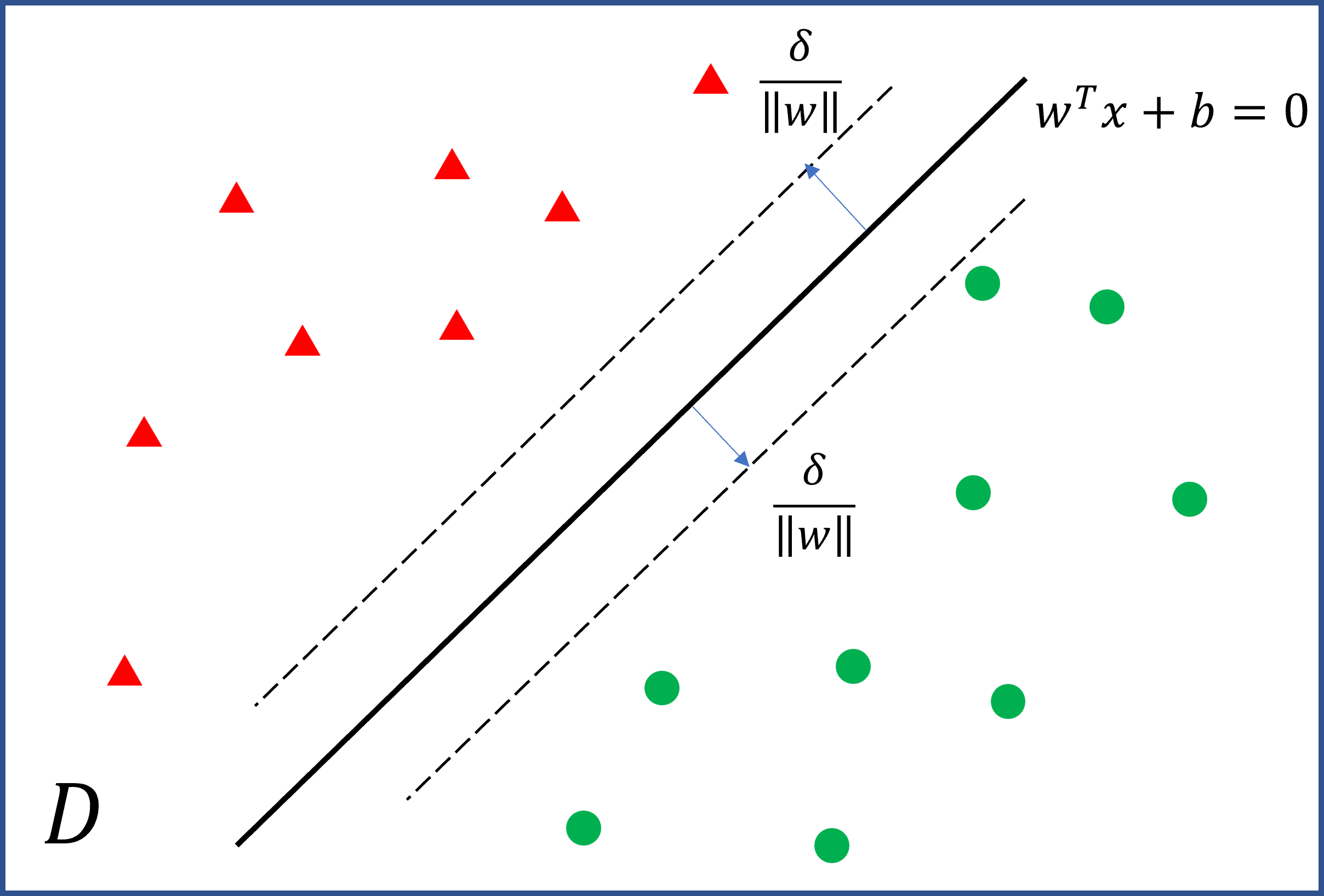}
\caption{The effective region $\mathcal{E}(w, b)$.}
\end{wrapfigure}

Suppose the sample domain $D$ is partitioned into equal grids of length $\delta > 0$, the gradient of the MSE $\mathcal{L}(\theta)$ is actually estimated by
$$\nabla \mathcal{L}(\theta) \simeq \frac{1}{N} \sum_{i = 1}^N \frac{\partial L}{\partial \theta} (x_i, \theta)$$
which means that the parameters will stop updating when the effective region is too small to contain any data points. 

For instance, consider the approximation of the target function $\phi = \chi_{[0, 1]}$ over the domain $D = [-1, 1]$ using a single neuron $f(x; w) = \sigma(wx)$, so that the global minimum is at $w = \infty$ with zero loss. Now consider the partition $x_i = -1 + \frac{i-1}{K}$ for $i = 1, 2, ..., 2K + 1$, according to the previous results, the largest reachable value of $\| w \|$ by gradient descent cannot exceed $\frac{(p-1)\log 2}{\Delta x}$ where $\Delta x = \frac{1}{K}$ is the grid size. 

On the other hand, the actual $L^2$ error at $w' = \frac{(p-1)\log 2}{\Delta x}$ is
\begin{equation}
\|  f(\cdot; w') - \phi \|_2 = \sqrt{\frac{2}{w'} (\log(\frac{2}{1 + e^{-w'}}) + 1 - \frac{2}{1 + e^{w'}}) } 
\sim \mathcal{O}( \sqrt{\Delta x} \ |\log \epsilon|^{-\frac{1}{2}})
\end{equation}
when $|w'|\gg 1$, which then leads to Theorem~\ref{th:minimum}.
\end{proof}

\begin{wrapfigure}{r}{0.35\textwidth}
\vspace{-12pt}
\centering
\includegraphics[width=0.35\textwidth]{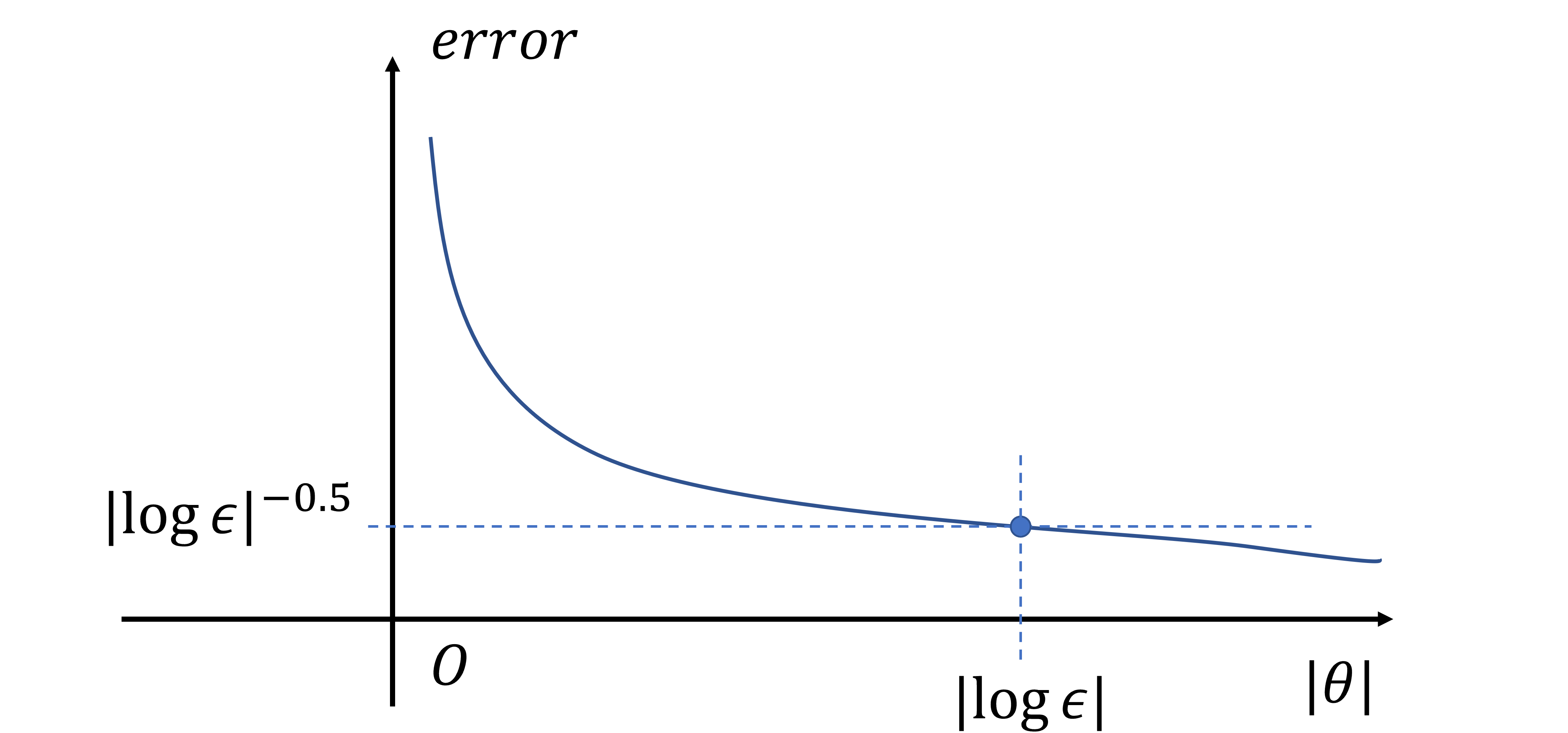}
\caption{The minimal achievable error is determined by $|\log \epsilon|$.}
\label{fig:log-eps}
\end{wrapfigure}

Theorem~\ref{th:minimum} suggests that the unreachable infinity problem is hard to address for two reasons: first, increasing the machine precision does not significantly improve the performance, as the minimum of error is proportional to $|\log \epsilon|^{-\frac{1}{2}}$ which decays very slowly (Figure \ref{fig:log-eps}); second, The discretization error is a fundamental limitation present in both PINNs and traditional methods, suffering from the curse of dimensionality \citep{aubin}. Additional discussions on numerical issues when the global minimum does not exist can also be found in \cite{glorot} and \cite{gallon}.

\section{Experiments}
\label{sec:exp}

In this section, we validate our theoretical results in the one-dimensional Burgers' equation \citep{basdevant1986spectral}:
\begin{equation}
\label{burgers}
    \begin{cases}
        & \frac{\partial u}{\partial t} + \mu u \frac{\partial u}{\partial x} = \nu \frac{\partial^2 u}{\partial x^2}, \quad  x \in [-1, 1], t \in [0, 1];\\
      & u(x, 0) = \sin\left(\frac{\pi x}{2}\right), \quad x \in \mathbb{R};  \\
    \end{cases} 
\end{equation}
where $\mu=-1, \nu=10^{-3}$. The Burgers' equation is a nonlinear partial differential equation that models a combination of advection and diffusion. This system is carefully selected because both of its initial condition and differential operator are smooth, but they finally lead to a discontinuous solution at $t = 1$. We first solve \eqref{burgers} by minimizing its PINN objective. Then, we train on a new loss function that combines the $L^2$ residual with data from the true solution.

\paragraph{Solving with PINN Loss.}

The solution is represented by a 2-hidden-layer \textit{sigmoid} network whose size is $256 \times 256$. We minimize the loss function \eqref{pdeloss} via SGD and Adam separately, and the results are presented in Figure~\ref{fig:burgers}. It turns out that both optimizers provide smooth approximations that do not match the true solution.

\begin{figure}[t!]
\centering

\subfigure[]{\includegraphics[width=0.24\textwidth]{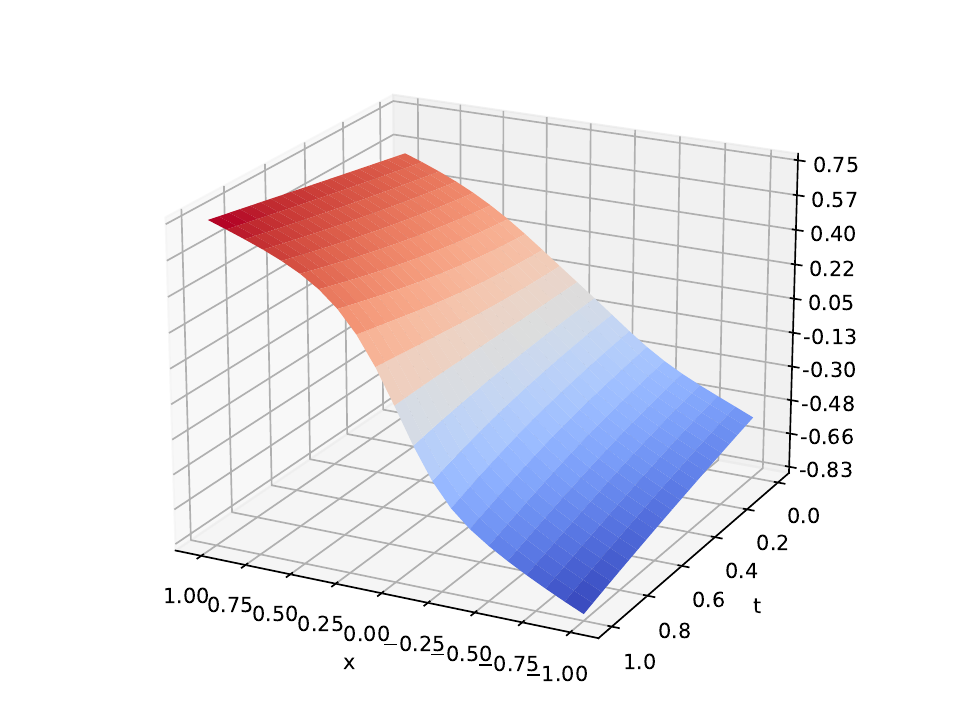}}
\subfigure[]{\includegraphics[width=0.24\textwidth]{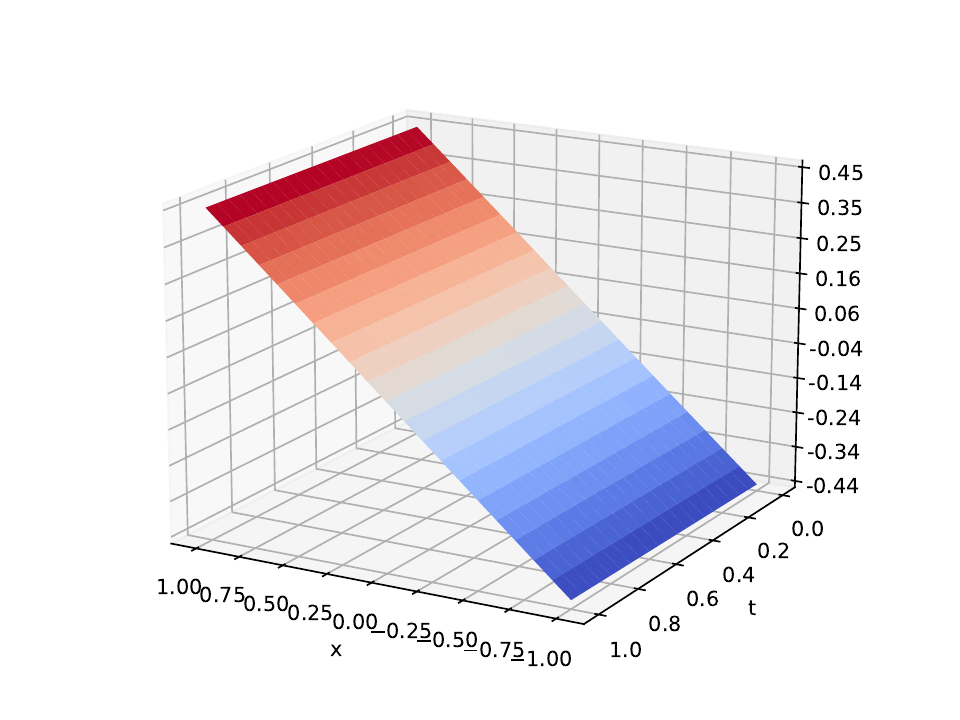}}
\subfigure[]{\includegraphics[width=0.24\textwidth]{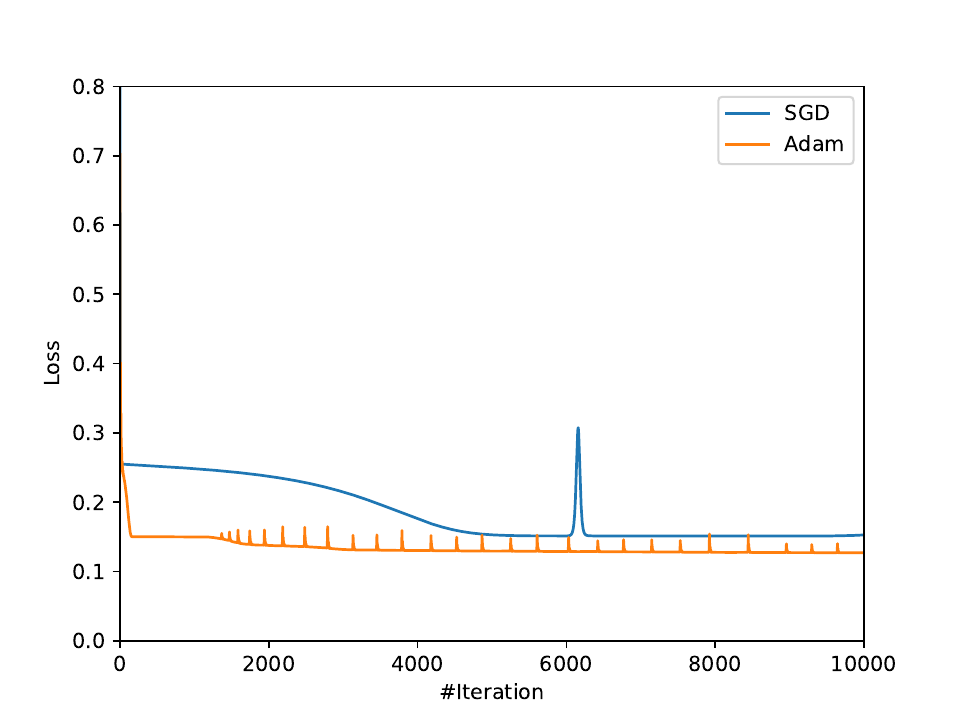}}
\subfigure[]{\includegraphics[width=0.24\textwidth]{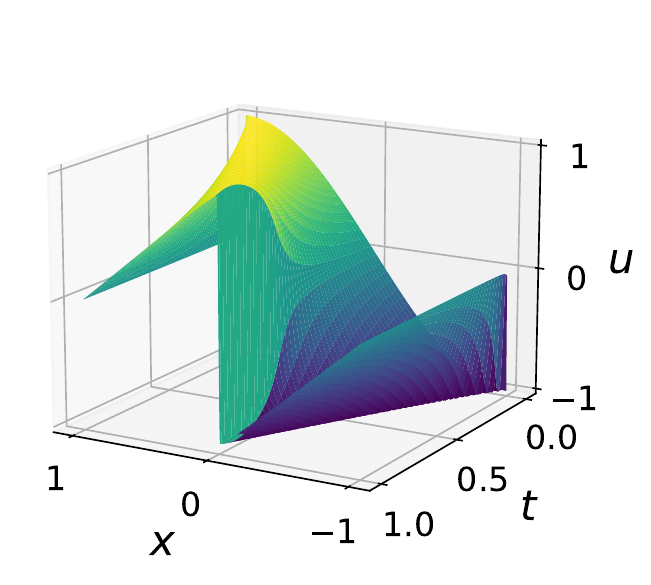}}

\caption{(a) Solution by Adam; (b) Solution by SGD; (c) Training curves; (d) True solution of \eqref{burgers}.}
\label{fig:burgers}.
\end{figure}

\paragraph{Approximation gap of neural networks.}

To examine the limitation of neural network approximation for discontinuous solutions, we minimize the new loss function
$$\tilde{J}(\theta) =  \frac{1}{K} \sum_{i = 1}^K |u(x_i, t_i; \theta) - u_i|^2,$$
where $u_i$ are the values of true solution at $(x_i, t_i)$, which are collected by computing the numerical solution using the spectral method implemented in \cite{binder2021}. The time resolution is 0.01 and spatial resolution is 0.001. Figure \ref{fig:burgers} (d) visualizes the solution. We evaluate the mean squared error of fully connected networks with \textit{sigmoid} activation trained under SGD. In the first set of experiment, we use a 2 layer network with hidden size ranging from 2 to 256. In the second set, we vary the number of layers while keeping all hidden dimension at 4. Figure \ref{fig:burgers_loss_width} (b,c) show that a better approximation is obtained than using $L^2$ residual only. They also show the approximation error of neural networks with different width and depth.
Figure \ref{fig:burgers_loss_2d} visualizes the predicted solution against the numerical solution at different $t$.

\begin{figure}[h!]
\centering

\subfigure[]{\includegraphics[width=0.24\textwidth]{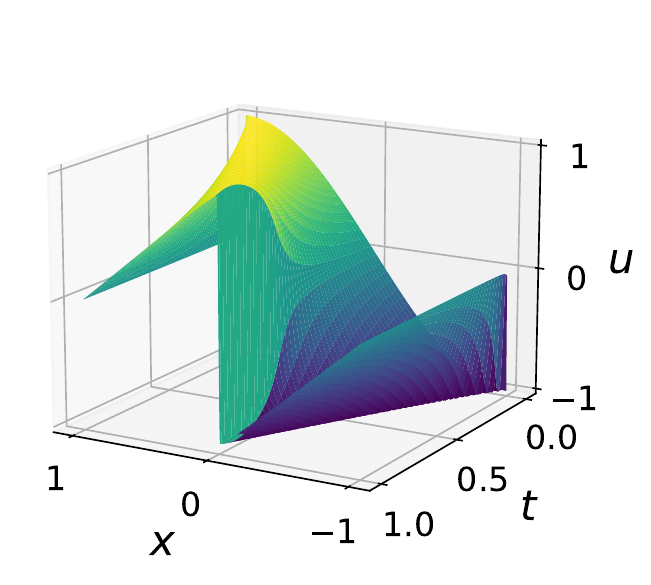}}
\subfigure[]{\includegraphics[width=0.24\textwidth]{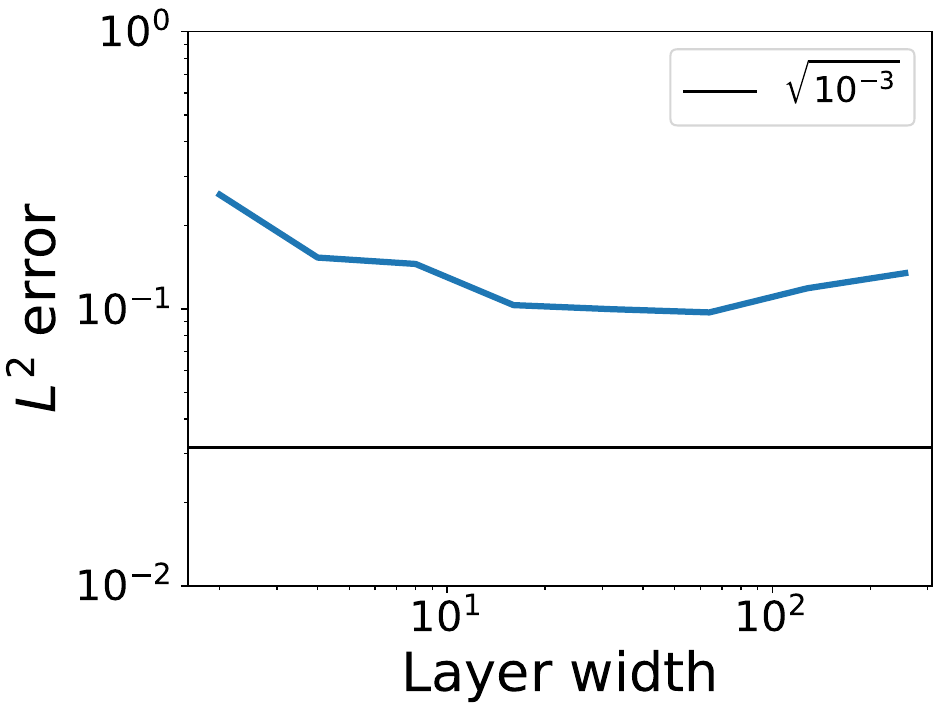}}
\subfigure[]{\includegraphics[width=0.24\textwidth]{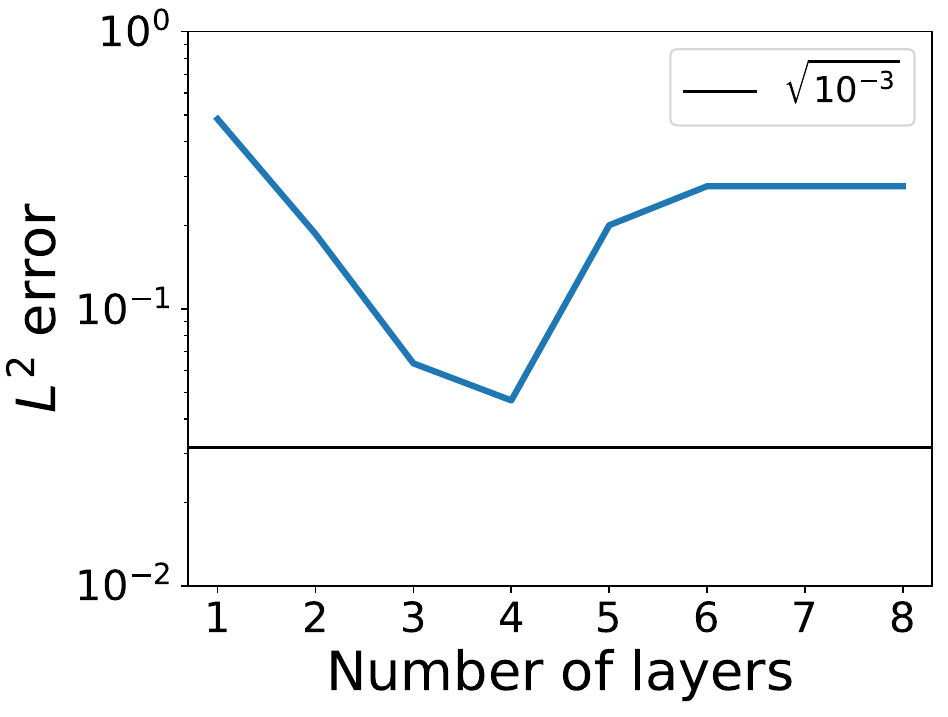}}

\caption{(a) Numerical solution represented by neural network; (b) Neural network approximation error vs width; (c) Neural network approximation error vs number of layers.}
\label{fig:burgers_loss_width}
\end{figure}

\begin{figure}[h!]
\centering
\subfigure[]{\includegraphics[width=0.19\textwidth]{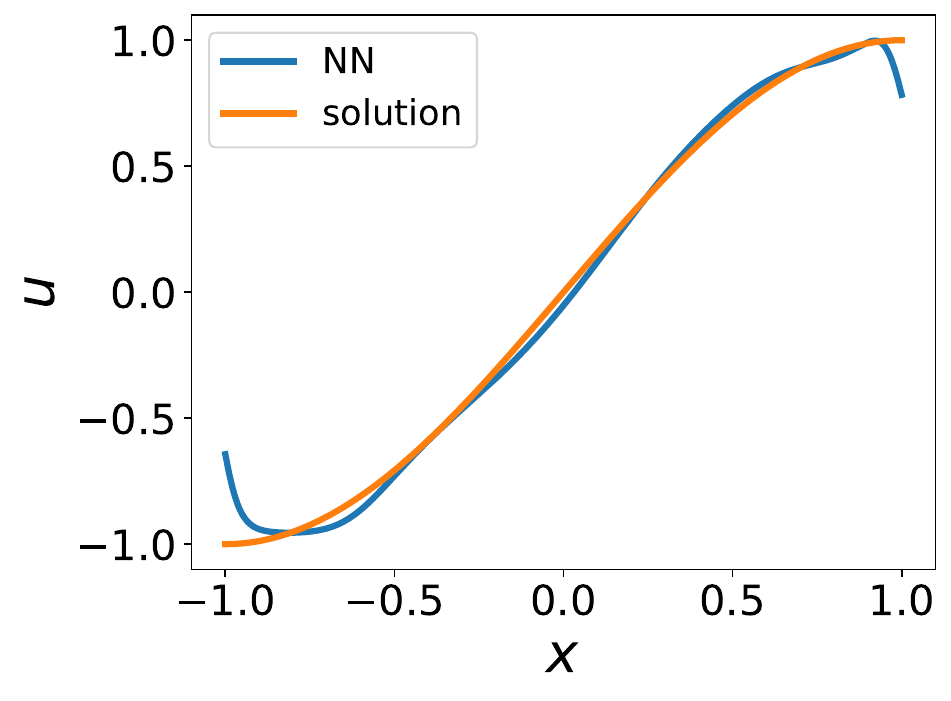}}
\subfigure[]{\includegraphics[width=0.19\textwidth]{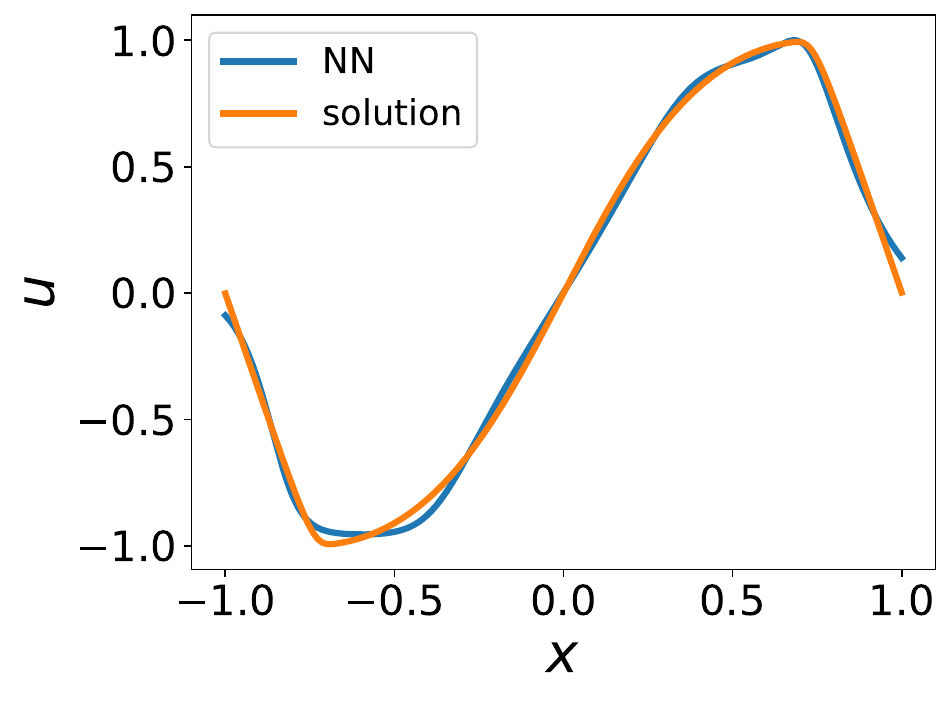}}
\subfigure[]{\includegraphics[width=0.19\textwidth]{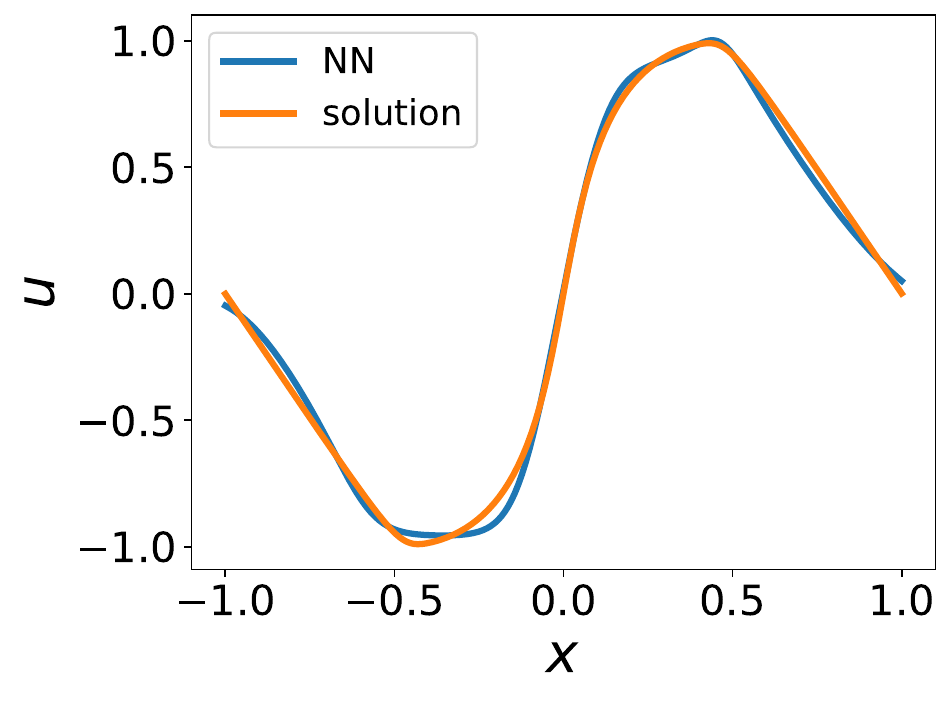}}
\subfigure[]{\includegraphics[width=0.19\textwidth]{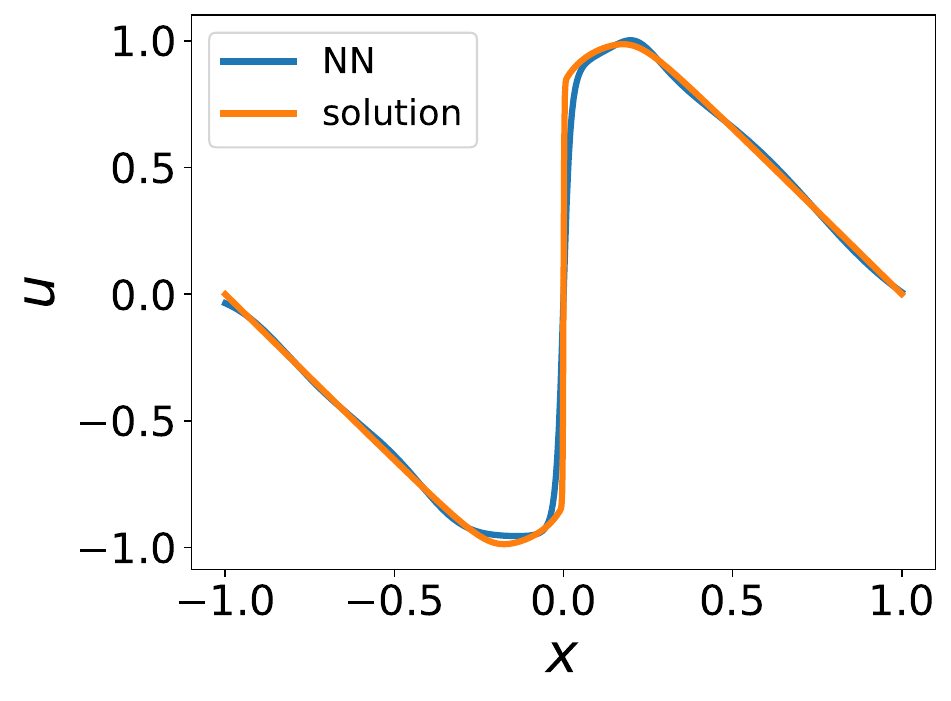}}
\subfigure[]{\includegraphics[width=0.19\textwidth]{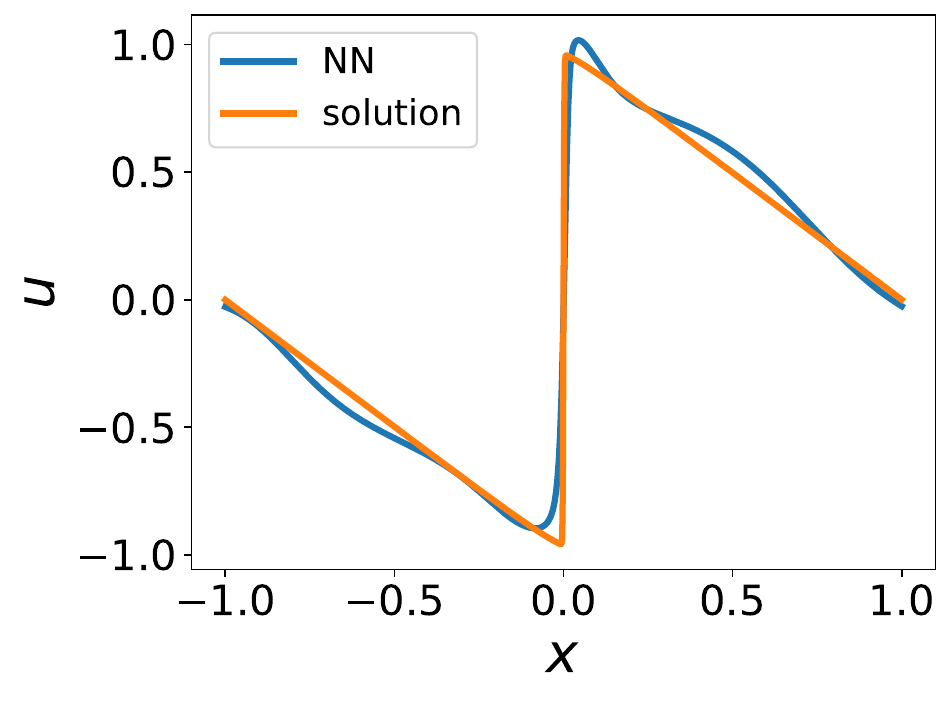}}

\caption{Comparison of predicted and analytical solution of Burgers' equation at different time: (a) $t = 0$; (b) $t = 0.25$; (c) $t = 0.5$; (d) $t = 0.75$; (e) $t = 1$. A shock wave is formed at $t = 1$.}
\label{fig:burgers_loss_2d}
\end{figure}

\paragraph{Analysis.} 
Figure \ref{fig:burgers_loss_width} (a) shows that it is possible to use neural networks to represent the true solution. This verifies that the failure case in Figure~\ref{fig:burgers} is due to the limitation of PINN loss \eqref{pdeloss} in capturing the emergence of shock waves as suggested in Section~\ref{sec:residual}. On the other hand, even with the real data, the best achieved $L^2$ error still exceeds the square root of the resolution ($\delta = 10^{-3}$), which further validates the claim on machine precision in Theorem~\ref{th:minimum}.

\section{Conclusion}

In this work, we discuss the fundamental limitations of physics-informed neural networks (PINNs) in solving Cauchy problems. However, these issues rarely affect classical methods, prompting a reconsideration of using neural networks for scientific computing. While we focus on the aspects of $L^2$ residual and neural network approximation, it is worth noting that various factors, such as the sampling scheme and optimizer choice, can also influence the numerical performance. Overall, we advocate for a comprehensive understanding of the foundation of dynamical models for future integration of deep learning in scientific computing.

\section*{Acknowledgments}

This work was supported in part by Army-ECASE award W911NF-23-1-0231, the U.S. Department Of Energy, Office of Science under \#DE-SC0022255, IARPA HAYSTAC Program, CDC-RFA-FT-23-0069, NSF Grants \#2205093, \#2146343, and \#2134274.

We would like to thank the anonymous reviewers for their valuable suggestions.

\bibliography{references}

\end{document}